\documentclass{article}

\PassOptionsToPackage{numbers, compress}{natbib}

\newcommand{\ab}{{\boldsymbol a}}

\newcommand{\db}{{\boldsymbol d}}

\newcommand{\ub}{{\boldsymbol u}}
\newcommand{\vb}{{\boldsymbol v}}
\newcommand{\wb}{{\boldsymbol w}}

\newcommand{\Ab}{{\boldsymbol A}}

\newcommand{\Db}{{\boldsymbol D}}

\newcommand{\Ac}{{\mathcal A}}
\newcommand{\Uc}{{\mathcal U}}

\usepackage[final]{neurips_2025}

\usepackage[utf8]{inputenc} %
\usepackage[T1]{fontenc}    %
\usepackage{hyperref}       %
\usepackage{url}            %
\usepackage{booktabs}       %
\usepackage{amsfonts}       %
\usepackage{nicefrac}       %
\usepackage{microtype}      %
\usepackage{xcolor}         %

\usepackage{amsmath}
\usepackage{amssymb}
\usepackage{mathtools}
\usepackage{amsthm}
\usepackage{amsfonts}
\usepackage{bbm}

\usepackage[capitalize,noabbrev]{cleveref}
\usepackage[textsize=tiny]{todonotes}
\usepackage[export]{adjustbox}

\usepackage{multirow}
\usepackage{longtable}
\usepackage{tablefootnote}
\usepackage{enumitem}
\usepackage{float}

\definecolor{custom_highlight}{HTML}{B62805}
\definecolor{darkgreen}{HTML}{228B22}

\usepackage{gav_style}

\crefname{equation}{Eq.}{Eqs.}

\usepackage{pifont} %
\newcommand{\cmark}{\ding{51}}%
\newcommand{\xmark}{\ding{55}}%
\usepackage{wrapfig}

\theoremstyle{plain}
\newtheorem{theorem}{Theorem}[section]
\newtheorem{proposition}[theorem]{Proposition}
\newtheorem{lemma}[theorem]{Lemma}

\theoremstyle{definition}

\theoremstyle{remark}

\let\oldproofname=\proofname
\renewcommand{\proofname}{\rm\bf{\oldproofname}}

\title{Guided Diffusion Sampling on Function Spaces \\ with Applications to PDEs}

\usepackage{anyfontsize}

\newcommand*\samethanks[1][\value{footnote}]{\footnotemark[#1]}
\author{%
    Jiachen Yao\thanks{Equal contribution.},\;\; Abbas Mammadov\samethanks[1]\;\,\thanks{Now at the University of Oxford.} \vspace{0.3em} \\
    \normalfont{California Institute of Technology} \vspace{0.3em}
    \And
    Julius Berner  \vspace{0.3em} \\
    \normalfont{NVIDIA} \vspace{0.3em}
    \And
    Gavin Kerrigan  \vspace{0.3em} \\
    \normalfont{University of Oxford} \vspace{0.3em}
    \AND    
    Jong Chul Ye  \vspace{0.3em} \\
    \normalfont{KAIST} \vspace{0.3em}
    \And
    Kamyar Azizzadenesheli  \vspace{0.3em} \\
    \normalfont{NVIDIA} \vspace{0.3em}
    \And 
    Anima Anandkumar  \vspace{0.3em} \\
    \normalfont{California Institute of Technology} \vspace{0.3em}
}

\begin{document}

\maketitle

\begin{abstract}
    We propose a general framework for conditional sampling in PDE-based inverse problems, targeting the recovery of whole solutions from extremely sparse or noisy measurements.
    This is accomplished by a function-space diffusion model and plug-and-play guidance for conditioning.
    Our method first trains an unconditional discretization-agnostic denoising model using neural operator architectures. 
    At inference, we refine the samples to satisfy sparse observation data via a gradient-based guidance mechanism. %
    Through rigorous mathematical analysis, we extend Tweedie's formula to infinite-dimensional Banach spaces, providing the theoretical foundation for our posterior sampling approach.
    Our method (FunDPS) accurately captures posterior distribution in function spaces under minimal supervision and severe data scarcity. Across five PDE tasks with only 3\% observation, our method achieves an average 32\% accuracy improvement over state-of-the-art fixed-resolution diffusion baselines while reducing sampling steps by 4x. Furthermore, multi-resolution fine-tuning ensures strong cross-resolution generalizability and speedup. To the best of our knowledge, this is the first diffusion-based framework to operate independently of discretization, offering a practical and flexible solution for forward and inverse problems in the context of PDEs. Code is available at \textcolor{custom_highlight}{\href{https://github.com/neuraloperator/FunDPS}{https://github.com/neuraloperator/FunDPS}}.
\end{abstract}

\section{Introduction}

Conditional sampling is a ubiquitous task in machine learning and scientific computing that involves generating samples from a distribution conditioned on certain constraints or observations.
This task appears naturally in many applications where we need to reconstruct high-dimensional data given partial, corrupted, or indirect measurements.
It has been extensively studied in the image domain for tasks like inpainting, deblurring, and super-resolution~\cite{daras2024survey, delbracio2023inversion, chung2022come, song2023pseudoinverseguided, zhu2023denoising}.

In scientific domains, an example of conditional sampling is climate modeling, where scientists predict future atmospheric states based on limited sensor measurements. Traditional forecasting methods try to produce a single, best-case outcome based on the available data. However, the chaotic nature of weather systems means small uncertainties in initial measurements can lead to drastically different outcomes~\cite{lorenz1963deterministic}. Conditional sampling methods can generate multiple plausible weather states consistent with the available data~\cite{price2023gencast}. This probabilistic approach is especially valuable for predicting extreme weather events and making informed policy decisions.
Beyond weather prediction, conditional sampling naturally arises in scientific tasks such as solving inverse problems in physical systems.
These examples include recovering permeability fields in subsurface flows~\cite{zhu2018bayesian}, inferring material properties in elasticity~\cite{kamali2023elasticity}, and identifying initial conditions for fluid simulations~\cite{geneva2020modeling}.

Traditional numerical methods often struggle with ill-posed problems such as when observations are sparse or noisy, as they cannot effectively leverage prior knowledge about the solutions. 
While Markov Chain Monte Carlo (MCMC) methods can theoretically sample from the posterior distribution, they are computationally intensive and often require exponentially many iterations to converge, making them impractical for high-dimensional problems~\cite{ronen2022mixingtimelowerbound}. Moreover, constructing effective proposal distributions for MCMC is particularly challenging, leading to poor mixing times~\cite{beskos2017geometric}.

\begin{figure}[!t]
    \centering
    \includegraphics[width=\textwidth]{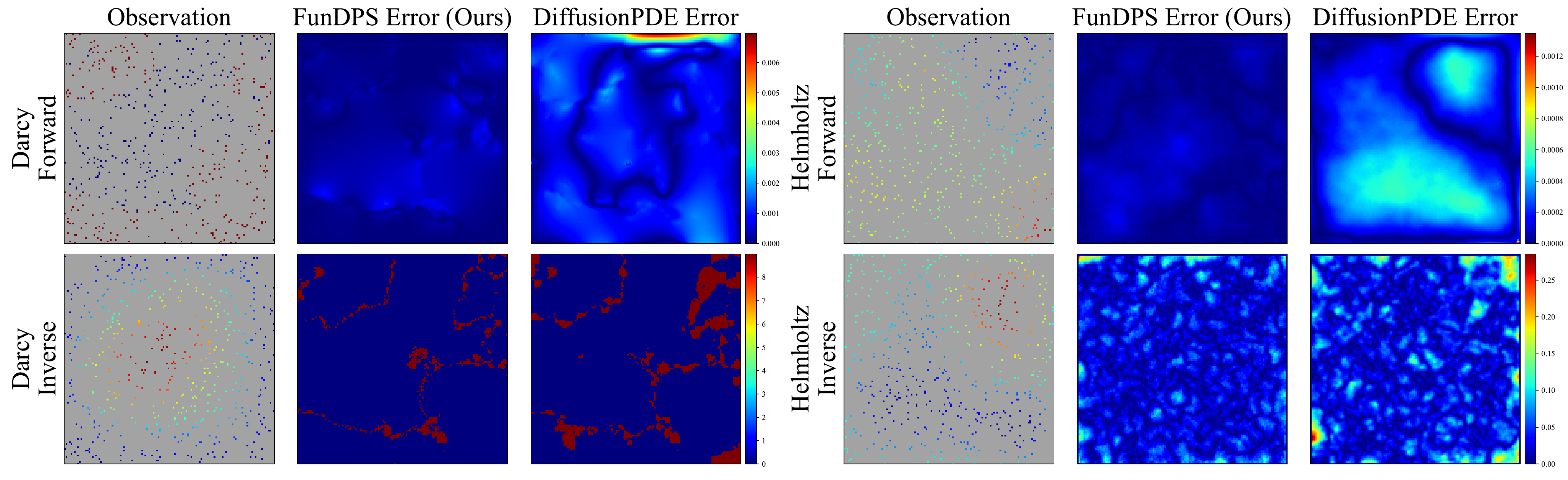}
    \caption{Comparison between our method (FunDPS) with the state-of-the-art diffusion baseline, DiffusionPDE, on the Darcy Flow and Helmholtz problems.
    The left column shows the sparse observation measurements (3\% of total points), while the other two columns display the absolute reconstruction error of our method and DiffusionPDE, respectively. 
    FunDPS achieves superior accuracy with an order of magnitude fewer sampling steps.}
    \label{fig:teaser}
\end{figure}

\paragraph{Bayesian approach to conditional sampling} 
A principled approach to conditional sampling is from a Bayesian perspective~\citep{dashti2017bayesian}. We aim to sample from the posterior distribution $p(\ab|\ub) \sim p(\ab)p(\ub|\ab)$, where $\ub$ represents our observations. In inverse problems, this posterior is defined through a forward operator $\Ab$ and the likelihood function $p(\ub|\ab) = p(\ub|\Ab(\ab))$. The forward operator $\Ab$ can take many forms depending on the application. For instance, when $\Ab$ is a masking operator that selects specific coordinates, it forms the reconstruction problem from sparse observations. 
When $\Ab$ represents a PDE solution operator mapping from parameters to solutions, we obtain parameter inference problems in physics.
The likelihood function can also incorporate different noise assumptions to handle varying measurement uncertainties. This general framework effectively consolidates many conditional sampling problems into one unified mathematical formulation.

\paragraph{Diffusion Models in Function Spaces} Physical systems are inherently described by continuous functions rather than discrete grids. 
Consequently, methods developed for finite-dimensional settings tend to perform poorly when applied to different discretizations of infinite-dimensional problems. While diffusion models have been extended to function space settings, existing approaches have limitations. Denoising Diffusion Operator \citep{lim2023scorefs} uplifts the generative process to infinite dimensions by a function-valued annealed Langevin dynamics, but is limited to an uncontrollable generation pipeline. \citet{baldassari2023conditionalscorebaseddiffusionmodels} suggests a conditional denoising estimator in infinite dimensions, yet its adaptability is limited by the requirement of a pre-trained conditional score model.
In real-world applications, the density and configuration of sensors can vary significantly, and training a separate model for each measurement setup is too costly. 
This highlights the need for a method capable of performing diffusion posterior sampling in function spaces using an unconditional score model. The ability to use one such model for various downstream tasks offers substantial flexibility, effectively decoupling the core model development from specific application requirements.

\begin{figure*}[!t]
    \centering
    \includegraphics[width=\textwidth]{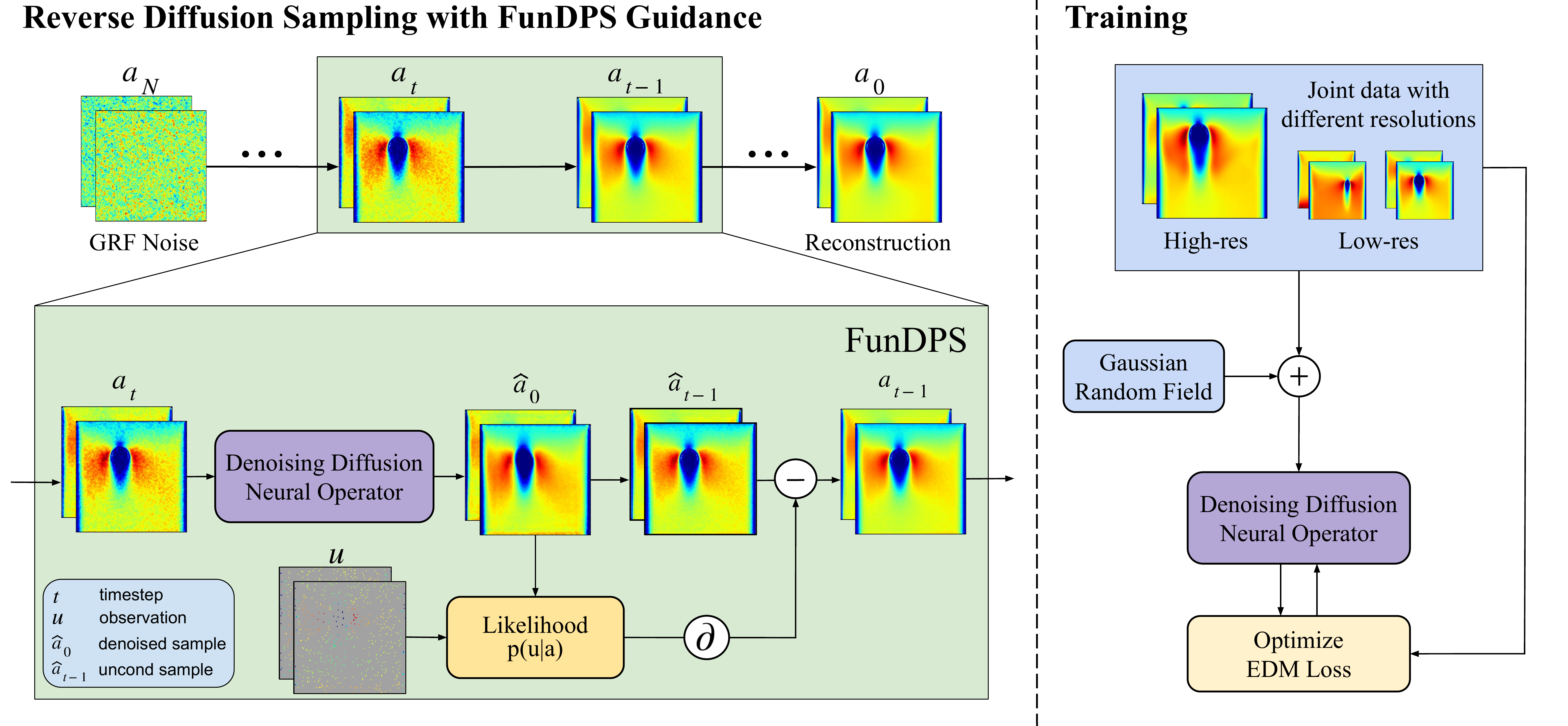}
    \caption{The sampling and training pipelines of FunDPS. During inference, we utilize a standard reverse diffusion approach with additional FunDPS guidance to drag the samples to the posterior. During training, the model is based on a U-shaped neural operator, and Gaussian random fields are used as the noise sampler to ensure consistency within function spaces. Notations are detailed in blue box in the bottom-left corner, where function $\ab$ jointly represents the PDE parameters and solution.}
    \label{fig:main}
\end{figure*}

\paragraph{Contributions} We introduce \textbf{Fun}ction-space \textbf{D}iffusion \textbf{P}osterior \textbf{S}ampling (FunDPS), a novel framework that leverages diffusion operators to address inverse problems by resolution-independent conditional sampling. Our contributions can be regarded as tri-fold:

\textit{1) Infinite-dimensional Tweedie's formula.}\\
We develop a novel theoretical framework for posterior sampling in infinite-dimensional spaces by extending Tweedie's formula to the Banach space setting. Tweedie's formula gives a closed-form expression for the posterior mean of the reverse diffusion process and serves as the basis for many diffusion solvers. However, prior to this work, it has only been shown to hold in \emph{finite} dimensions. We rigorously establish a generalization of Tweedie's formula to \emph{function spaces}, thereby providing a plug-and-play approach for guided sampling in infinite-dimensional inverse problems.
Concretely, we study the scenario where a given data measure is perturbed via an additive Gaussian measure, and show an equivalence between the score of the resulting noisy distribution and the conditional expectation of the noise-free sample given a noisy one.
This generalization, combined with a measure-theoretic decomposition of the conditional score, enables us to inject measurement consistency (or PDE constraints) into an \emph{unconditional} function-space diffusion model at inference time. %

\textit{2) Multi-resolution conditional generation pipeline in function space (FunDPS).}\\
We propose the function-space diffusion posterior sampling method that builds on the above theoretical framework, the pipeline of which is shown in \Cref{fig:main}. An unconditional diffusion model is first pretrained using the score matching objective as defined in \cite{lim2023scorefs}. 
The model works on a joint functional representation of PDE parameters and solutions, which allows us to solve forward problems, inverse problems, and a combination thereof using partial observations.
The model employs a \textbf{multi-resolution training strategy}, which begins training on a coarse grid, then fine-tunes on a finer one. 
It \textbf{reduces GPU training hours by 25\%}, which is enabled by the Gaussian random field (GRF)-based noise model and neural operator architecture.
After training the unconditional model, we generate conditional samples as follows:
The sample is initialized with GRF noise, followed by iterative denoising via a second-order deterministic sampler~\cite{karras2022elucidating}. At each step, the sample evolves according to the conditional score operator, which decomposes into data prior (from the pretrained model) and observation-based likelihoods. Our generalized Tweedie's formula enables us to efficiently approximate the measurement log-likelihood. FunDPS can also employ \textbf{multi-resolution inference} by performing most sampling steps at a lower resolution and only upsampling towards the end, which alone \textbf{yields 2x speedup while maintaining accuracy}.

\textit{3) Extensive experiments with SotA performance in both speed and accuracy.}\\
We test our approach on five challenging PDEs,
which vary widely in their difficulty due to 
complex input distributions (GRF's multi-scale features),
different boundary conditions (Dirichlet and periodic),
and nonlinear patterns (Navier-Stokes).
We simulate extreme obfuscation in measurements by masking. Specifically, for forward tasks (e.g., recovering the final state), we only observe $3\%$ of the initial condition and reconstruct the entire solution, while for inverse tasks, we observe only $3\%$ of the final state and reconstruct unknown initial or coefficient fields. Despite these difficulties, our method consistently achieves top performance (see \Cref{fig:teaser}). 
Across all five tasks, we \textbf{reduce the average error by roughly 32\%} compared to the state-of-the-art diffusion-based solver (DiffusionPDE~\cite{huang2024diffusionpde}, which operates at a fixed trained resolution) and surpass deterministic neural PDE solvers by even larger margins.
Moreover, our reverse diffusion process requires only 200--500 steps to converge—up to \textbf{10$\times$ fewer} than DiffusionPDE—while maintaining superior accuracy %
(see \Cref{fig:model_comparison}a).
We attribute these improvements primarily to FunDPS’s function-space formulation, which aligns better with physics functions and applies smoother guidance, \textbf{cutting inference wall-clock time by 25$\times$} compared to DiffusionPDE with minimal impact on accuracy.

\section{Related Works}

\paragraph{Neural PDE solvers}
Neural PDE solvers aim to approximate PDE solutions or operators using deep networks. Physics-informed neural networks (PINNs)~\cite{raissi2019physics} incorporate PDE residuals and boundary conditions into the loss function, which enables both forward and inverse problem solving but often at the cost of challenging optimization or limited scalability \cite{hao2023pinnaclecomprehensivebenchmarkphysicsinformed, liu2024preconditioningphysicsinformedneuralnetworks}.
Operator-learning approaches such as Fourier Neural Operator (FNO) \cite{li2020fno} and DeepONet~\cite{lu2021learning,kovachki2023neuralop} learn resolution-invariant mappings between function spaces. Meanwhile, graph-based neural PDE solvers~\cite{brandstetter2022message, pfaff2020learning} treat spatial discretization as a graph rather than functions and apply message passing to approximate PDE dynamics on potentially irregular domains. Despite these breakthroughs, most existing works yield single (deterministic) outputs rather than sampling from a posterior distribution over PDE equations in function spaces, an aspect that we address through our diffusion-based approach.

\paragraph{Inverse PDE problems}
Inverse problems in PDEs, like recovering material properties or flow states from sparse or noisy measurements, have been mainly tackled by PDE-constrained optimization or Bayesian inference settings. Recently, data-driven approaches like PINN, FNO, and DeepONet~\cite{raissi2019physics, li2020fno, lu2021learning} have emerged as efficient approaches to tackle PDE problems. 
However, their main target is not recovering the full fields from partial observations, but learning the function-space mappings. 
More recently, Energy Transformer \cite{zhang2025operator} can reconstruct full fields from incomplete or irregular data, but they are limited by the patch size. Others have explored cGANO for broadband ground-motion synthesis \citep{shi2024broadband}, unified latent representations for forward--inverse subsurface imaging \citep{gupta2024unified}, measurement-guided diffusion in geophysical tasks \citep{ravasi2025geophysical}, and physically consistent score-based methods with PDE constraints \citep{jacobsen2023cocogen}. Although these works demonstrate promising directions and partial overlap with our goals, they typically rely on discretized image domains or separately trained physical models, lacking resolution-invariant features and flexibility. %

\paragraph{Diffusion-based posterior sampling} 
Diffusion models have demonstrated high quality and stability in generation tasks~\cite{ho2020ddpm, dhariwal2021adm, nichol2021iddpm, karras2022elucidating}. By learning a score function%
---a vector field pointing towards high-density data regions at different noise levels---we can reverse the noising process to sample from the prior distribution $\nu(\ab)$~\cite{song2019ncsn, vincent2011dsm, anderson1982reverse}. 
Diffusion models have been widely used to solve inverse problems in a plug-and-play manner, utilizing a pre-trained unconditional diffusion model as a prior and its sampling process to integrate constraints~\cite{kawar2022ddrm, chung2022mcg}.
One key advantage of these methods is their flexibility---the same unconditional model can be used for various downstream tasks without retraining. Various approaches have been proposed to leverage diffusion priors, ranging from guidance terms or
resampling strategies within the generative process~\cite{zhang2024daps, chung2022dps} to integrations within variational
frameworks~\cite{mardani2023reddiff, mammadov2024amortized}.
While widely researched for fixed-resolution inverse problems like images, their application to partial differential equations (PDEs) in function spaces is less explored. 
DiffusionPDE~\cite{huang2024diffusionpde} learns joint distributions of PDE parameters and solutions, but its reliance on fixed discretizations limits its practical applicability in scientific computing.

\paragraph{Generative neural operators}
For physics-informed problems, developing true \emph{function-space} generative methods is crucial as data are inherently functions.
\citet{lim2023scorefs} proposes Denoising Diffusion Operators (DDOs), which extend diffusion processes to Gaussian random fields and prove resolution-independence under discretization, while \citet{pidstrigach2023infinite} provides a rigorous mathematical framework for infinite-dimensional diffusion models that preserve key properties as discretization is refined. Another work uses \emph{adversarial} training on function spaces~\cite{rahman2022generative}, which introduces Generative Adversarial Neural Operators (GANO) to learn push-forward maps between probability measures in infinite-dimensional Hilbert spaces. Flow matching techniques~\cite{lipman2022flow} offer an alternative by directly learning a velocity field whose induced transport matches the target distribution. This framework has very recently been extended to infinite-dimensional settings~\cite{kerrigan2023functional, shi2024universal, shi2025stochastic}. \citet{kerrigan2024dynamic} further applies functional flow matching in a conditional inference context, providing a rigorous training scheme for downstream tasks. In contrast, we leverage a pre-trained functional prior and simply compose it with arbitrary measurement operators at test time in a plug-and-play manner, without requiring task-specific re-training.

\section{Method}

We now present our function-space diffusion posterior sampling (FunDPS) framework. We begin by formulating the inverse problem in function spaces and establishing the Bayesian framework. We then derive the conditional score through the decomposition of likelihood and prior terms and develop the necessary approximations for practical implementation. Finally, we describe the complete FunDPS algorithm, including our multi-resolution training strategy.

\subsection{Problem Settings}

\paragraph{PDE-based inverse problems} We focus on general inverse problems in function spaces, with the goal of retrieving an unknown function $\ab \in \Ac$ from a measurement function $\ub \in \Uc$ given by
\begin{equation}
    \ub = \Ab(\ab) + \varepsilon\, \quad \text{with} \quad \Ab:\Ac \to \Uc,
\label{eq:inv}
\end{equation}
where $\varepsilon$ is an $\Uc$-valued random variable representing the measurement noise and $\Ac$
and $\Uc$
are separable Hilbert spaces. %
Such inverse problems frequently appear in the context of PDEs
\begin{equation}
       L_c f = 0 \ \ \text{(on $D$)} \quad \text{and} \quad
       f|_{\partial D} = g,
\label{eq:pde-setup}
\end{equation}
where $L_c$ is a differential operator depending on a coefficient function $c$, the function $g$ encodes boundary or initial values on the domain $D$, and $f$ is the solution\footnote{We assume that the considered PDEs allow for unique, strong solutions in a suitable space of functions.} function.
For classical inverse problems, $\Ab$ can be the operator mapping the parameter functions of a PDE, i.e. coefficient functions $c$ and boundary values $g$, to the corresponding solution $f$.
Additionally, $\Ab$ can combine both a PDE solution operator and a masking operator for solving inverse problems with sparse measurements.%

\paragraph{Bayesian perspective}
For many practical solutions and masking operators, the inverse problem is ill-posed, i.e., $\Ab$ is not injective nor stable, meaning that small changes in $\ub$ can cause large variations in $\ab$. 
To address these issues, we consider a Bayesian perspective~\cite{dashti2017bayesian}, where we assume a prior distribution $\nu(\ab)$ on $\mathcal{A}$ with the aim of sampling from the corresponding posterior distribution $\nu^{\ub} = \nu(\ab|\ub)$. All distributions, including the prior $\nu$ and posterior $\nu^{\ub}$, are considered as probability measures, and the existence of densities in infinite-dimensional spaces often requires careful treatment.

We assume that the measurement noise $\varepsilon$ is a Gaussian random field (GRF) with covariance operator $\mathbf{C}_\eta$, i.e., $\varepsilon$ is drawn from the Gaussian measure $\eta=\mathcal{N}(0, \mathbf{C}_\eta)$ on $\mathcal{U}$, which is the natural extension of Gaussian noise to function spaces \citep{bogachev1998gaussian}. 
Observe that, given $\ab$, we have that $\ub \mid \ab$\scalebox{0.9}{$~\sim \eta^{\Ab(\ab)} = \mathcal{N}(\Ab(\ab), \mathbf{C}_\eta)$} is drawn from a Gaussian measure $\eta^{\Ab(\ab)}$ with mean $\Ab(\ab)$. When the noise-free observation $\Ab(\ab)$ is an element of the Cameron-Martin space \scalebox{0.9}{$\mathbf{C}_\eta^{1/2}(\mathcal{U}) =: H(\eta) \subset \mathcal{U}$} associated with $\mathbf{C}_\eta$, Cameron-Martin theorem allows us to compute the Radon-Nikodym derivative
\footnote{
We recall that the inner product and norm on the Cameron-Martin space $\mathcal{U}_0$ are readily computed via 
\vspace{-0.5em}
\begin{equation*}
    \langle \ub_0, \ub_1 \rangle_{H(\eta)} = \langle \mathbf{C}_\eta^{-1/2} \ub_0, \mathbf{C}_\eta^{-1/2} \ub_1 \rangle_\mathcal{U}, \qquad \ub_0, \ub_1 \in H(\eta).
\end{equation*}
\vspace{-1.4em}
}
{\small\begin{equation} \label{eqn:forward_likelihood_density}
    \frac{\d \eta^{\Ab(\ab)}}{\d \eta}(\ub) = \exp \left( \langle \ub, \Ab(\ab) \rangle_{H(\eta)} - \tfrac{1}{2}\lVert{\Ab(\ab)} \rVert^2_{H(\eta)} \right).
\end{equation}}
In other words, \Cref{eqn:forward_likelihood_density} gives the density of $\eta^\ab$ with respect to the noise measure $\eta$. We will use $\Phi: \mathcal{A} \times \mathcal{U} \to \mathbb{R}$ to represent the function 
$\Phi(\ab, \ub) = \langle \ub, \Ab(\ab) \rangle_{\mathcal{U}_0} - \tfrac{1}{2} \lVert \Ab(\ab) \rVert^2_{\mathcal{U}_0}$. Loosely speaking, $\Phi(\ab, \ub)$ is the log-likelihood of $\ub$ given $\ab$. In the special case that $\ub \in \R^n$ is finite dimensional (for instance, when $\Ab$ is the composition of a PDE solution operator and a finite observation mask), the measure $\eta$ corresponds to a mean-zero Gaussian random variable with covariance matrix \scalebox{0.9}{$\mathbf{C}_\eta \in \R^{n \times n}$}. In this case, when $\mathbf{C}_\eta$ has full rank, the Cameron-Martin space is \scalebox{0.9}{$\mathbf{C}_\eta^{1/2}(\R^n) = \R^n$} so that the log-likelihood is defined for any value of $\Ab(\ab)$.

By Bayes' rule~\cite{dashti2017bayesian}, the posterior $\nu^{\ub}$ is absolutely continuous with respect to the prior measure $\nu$, and moreover, the corresponding Radon-Nikodym derivative is proportional to the likelihood, i.e.,
$\frac{\d \nu^{\ub}}{\d \nu}(\ab) \propto \exp\left(\Phi(\ab, \ub)\right)$
with the constant of proportionality depending on $\ub$ but not $\ab$.%

\paragraph{Diffusion prior}
We propose to learn the prior distribution $\nu(\ab)$ from data using the recent extension of diffusion models to function spaces~\cite{lim2023scorefs, kerrigan2022infdiff}. These methods define a sequence of distributions $\nu_t(\ab_t)$ that progressively add noise to the data until approximately reaching a tractable latent distribution $\mu_T \approx \mathbf{\Gamma}$. Learning a score approximator $\Db_{\theta}(\ab_t, t) \approx \mathbb{E}[\ab_0 | \ab_t]$ from the data via a variant of score matching, one can approximately reverse the noising process\footnote{Following common conventions, we will also refer to the noising process as \emph{forward process}.} and sample from the data distribution.
Moreover, by replacing the score with a conditional expectation, one can solve the reverse SDE given initial conditions \cite{pidstrigach2023infinite}.
Notably, the denoiser $\Db_{\theta}$ is parametrized as a neural operator~\cite{lim2023scorefs, rahman2022uno}, and the noise is chosen to be Gaussian random fields, as opposed to neural networks and multivariate Gaussian random variables for the finite-dimensional setting (see \Cref{app:diff_prior} for more details). The diffusion prior will be updated by (approximated) conditional likelihood during inference in order to solve inverse problems.

\subsection{Conditional Score via Likelihood and Prior Score}

In our setting, the unconditional forward process of the diffusion model may be simulated via 
\begin{align} \label{eqn:forward_process}
    \ab_{t} = \ab_0 + \sigma_t\varepsilon_t \qquad \qquad \varepsilon_t\sim\gamma,\; \ab_0 \sim \nu
\end{align}
for some specified time-dependent constants $\sigma_t > 0$~\cite{lim2023scorefs}. Here, $\gamma = \mathcal{N}(0, \mathbf{C}_\gamma)$ is now a Gaussian measure on the space $\mathcal{A}$. We will also use $\gamma_t = \NN(0, \sigma_t^2 \mathbf{C}_\gamma)$ to represent the corresponding Gaussian measure with scaled covariance. The spaces $H(\gamma), H(\gamma_t)$ represent the Cameron-Martin spaces associated with these Gaussian measures. We will write $\nu_0(\ab_0) = \nu(\ab)$ for the prior distribution over $\ab$ and $\nu_t(\ab_t)$ for the marginal distribution of $\ab_t$ obtained during the forward process. Similarly, the \emph{conditional} forward process is obtained in the same manner as \Cref{eqn:forward_process} except with initial conditions $\ab_0 \sim \nu^{\ub}$ drawn from the posterior corresponding to a given, fixed $\ub$. We will use $\nu_0^{\ub} = \nu^{\ub}$ and $\nu_t^{\ub}$ for the corresponding measures. Informally, $\nu_t^{\ub}$ can be thought of as $\nabla_{\ab} p_t(\ab_t|u)$.

As in prior work \citep{lim2023scorefs, pidstrigach2023infinite}, we make the assumption that the prior $\nu_0$ is supported on $H(\gamma)$, i.e., $\nu(H(\gamma)) = 1$. Note that as the posterior $\nu_0^{\ub} \ll \nu_0$ is absolutely continuous with respect to the prior, we also have that $\nu_0^{\ub}(H(\gamma)) = 1$. Under this assumption, both $\nu_t$ and $\nu_t^{\ub}$ are equivalent to $\gamma_t$ in the sense of mutual absolute continuity (see \Cref{app:proof} and \citep[Lemma~13]{lim2023scorefs}).

In this case, by the Radon-Nikodym theorem there exists a density $\d \nu_t^{\ub} / \d \gamma_t$. We assume that the logarithm of this density is Fr\'echet differentiable along $H(\gamma_t)$. The object of interest when seeking to sample from the posterior $\nu^{\ub}$ using a diffusion model is the \emph{conditional score} of $\nu_t$, defined as
{\small\begin{equation}
    D_{H(\gamma_t)} \log \frac{\d \nu_t^\ub}{\d \gamma_t}: \AA \to H(\gamma_t)^*.
\end{equation}}
The unconditional score of $\nu_t$ is defined analogously. Previous work \citep{lim2023scorefs} uses this notion of a score to build function-space diffusion models. To simplify the notation, we write $\nabla_{\ab_t} = D_{H(\gamma_t)}$ as shorthand for this Fr\'echet derivative.

However, in our setup, we do not assume we have access to the score of $\nu_t^{\ub}$, but rather only to the unconditional score of $\nu_t$ and the log-likelihood function $\Phi(\ab_0, \ub)$. To overcome this, note that
{\small\begin{align}
    \nabla_{\ab_t} \log \frac{\d \nu_t^{\ub}}{\d \gamma_t}(\ab_t) = \nabla_{\ab_t} \log \frac{\d \nu_t^{\ub}}{\d \nu_t}(\ab_t) + \nabla_{\ab_t} \log \frac{\d \nu_t}{\d \gamma_t}(\ab_t) = \nabla_{\ab_t} \tilde{\Phi}_t(\ab_t, \ub) + \nabla_{\ab_t} \log \frac{\d \nu_t}{\d \gamma_t}(\ab_t),\label{eq:bayes_decompose}
\end{align}}
where $\tilde{\Phi}_t(\ab_t, \ub) = \log ( (\d \nu_t^\ub / \d \nu_t)(\ab_t) )$ is the log-likelihood of $\ub \mid \ab_t$. Thus, we have managed to decompose the conditional score into a sum of a likelihood term and the prior score as desired.

\subsection{Approximating the Likelihood in Function Spaces}
\label{sec:likelihood}

While the calculations in the previous section in principle allow for plug-and-play posterior sampling, a major difficulty is that the time-dependent likelihood $\tilde{\Phi}_t$ is intractable. In particular, from \Cref{eqn:forward_likelihood_density} we know the log-likelihood of $\ub$ given a noise-free sample $\ab_0$, but \Cref{eq:bayes_decompose} requires us to have access to the log-likelihood of $\ub$ given a \textit{noisy} sample $\ab_t$. In the next two sections, we develop a method for approximating this likelihood through a conditional expectation. 

Let us write $\mu_t^{\ab_t}$ for the conditional distribution of $\ub \mid \ab_t$. This measure can be sampled from by first predicting the clean $\ab_0$ from the noisy $\ab_t$, followed by sampling from the noise measure $\eta^{\Ab(\ab_0)}$ according to \Cref{eq:inv}. Hence, for any measurable $U \subseteq \mathcal{U}$, we have
{\small\begin{equation} \label{eqn:p_u_given_at}
    \mu_t^{\ab_t}(U) = \int_{\mathcal{A}} \eta^{\Ab(\ab_0)}(U) \d \nu_{0 \mid t}^{\ab_t}(\ab_0)
\end{equation}}
where $\nu_{0 \mid t}^{\ab_t}$ is the conditional measure of the reverse diffusion process given a noisy $\ab_t$. Denote by $\hat{\ab}_0(\ab_t) = \mathbb{E}[\ab_0 \mid \ab_t]$
the expected value of this measure. (We will sometimes write $\hat{\ab}_0$ to simplify the notation when the dependency on $\ab_t$ is clear from context.) Approximating $\nu_{0 \mid t}^{\ab_t} \approx \delta[ \hat{\ab}_0]$ with its mean, under the assumption that $\Ab(\ab_0) \in H(\eta)$ is an element of the CM space of $\eta$, we have

\vspace{-1em}
{\small
\begin{align} 
    \mu_t^{\ab_t}(U) = \int_{\mathcal{A}} \eta^{\Ab(\ab_0)}(U) \d \nu_{0 \mid t}^{\ab_t}(\ab_0) = \int_{\mathcal{A}} \int_U \frac{\d \eta^{\ab_0}}{\d \eta}(\ub) \d \nu_{0 \mid t}^{\ab_t}(\ab_0) \approx \int_U \frac{\d \eta^{\Ab(\hat{\ab}_0)}}{\d \eta}(\ub) \d \eta(\ub).
\end{align}}

Note further that \Cref{eqn:p_u_given_at} shows that if the prior $\nu_0$ is such that $\Ab(\ab_0) \in H(\eta)$ almost surely, then $\mu_t^{\ab_t} \ll \eta$.
In all, we have shown
{\small\begin{equation}
    \frac{\d \mu_t^{\ab_t}}{\d \eta}(\ub) \approx \frac{\d \eta^{\Ab(\hat{\ab}_0)}}{\d \eta}(\ub).
\end{equation}}

Carrying this approximation over into the log-likelihood, we obtain $\tilde{\Phi}_t(\ab_t, \ub) \approx \Phi(\hat{\ab}_0, \ub)$ as defined by \Cref{eqn:forward_likelihood_density}. Thus, when we are able to calculate $\nabla_{\ab_t} \Phi(\hat{\ab_0}(\ab_t), \ub)$, we may substitute this expression into \Cref{eq:bayes_decompose} in order to guide the diffusion process to approximately sample from the posterior $\nu^{\ub}$. 

In the special case that $\mathcal{U}$ is finite dimensional (e.g., when $\Ab$ represents observing our PDE at finitely many locations) and $C_\eta$ is full rank, observe that via a straightforward calculation we have
{\small\begin{equation} \label{eqn:findim_likelihood}
    \Phi(\hat{\ab}_0, \ub) = - \tfrac{1}{2} \left(\lVert \ub - \Ab \hat{\ab}_0 \rVert^2_{H(\eta)} - \lVert \ub \rVert^2_{H(\eta)} \right)
\end{equation}}
and so in this case we are justified in using an approximation to our log-likelihood of the form
{\small\begin{equation}
    \nabla_{\ab_t} \tilde\Phi_t(\ab_t, \ub) \approx \nabla_{\ab_t}\left( - \tfrac{1}{2} \lVert \mathbf{C}^{-1/2}_\eta(\ub - \Ab \hat{\ab}_0(\ab_t)) \rVert_{\mathcal{U}}^2\right)
    \label{eq:likelihood}
\end{equation}}
as the two differ by a constant depending only on $\ub$. However, when $\mathcal{U}$ is infinite dimensional, the Cameron-Martin space $H(\eta)$ has measure zero under $\eta$, and so $\ub - \Ab(\hat{\ab_0})$ almost surely not be an element of $H(\eta)$ in which case \Cref{eqn:findim_likelihood} is ill-defined. It is also worth noting that this approximation is less accurate when noise level is high \cite{cardoso2023montecarloguideddiffusion}, which opens future work in this topic.

\subsection{Approximation of Conditional Expectation via Tweedie's Formula}

Tweedie's formula~\cite{robbins1992tweedie} plays a fundamental role in diffusion solvers by providing a link between score functions and conditional expectations. It is a cornerstone in many guided sampling methods, including MCG~\cite{chung2022mcg}, DPS~\cite{chung2022dps}, and PSLD~\cite{rout2023solvinglinearinverseproblems}. In our framework, the approximated likelihood also relies on this efficient estimation of conditional expectations $\mathbb{E}[\ab_0 \mid \ab_t]$, but in function spaces.

Generalization of Tweedie's formula to function spaces requires careful treatment of several concepts. In infinite-dimensional spaces, we can no longer rely on probability density functions or standard gradients as in the finite-dimensional case. The notion of conditional expectation must be handled through measure-theoretic tools, and the score function needs to be redefined using the Fréchet derivative and the Riesz representation theorem. Here we present our extension below.

\begin{theorem}[Tweedie's formula in infinite-dimensional Banach spaces] \label{theorem:tweedie_main}
    Let $B$ be a separable Banach space. Assume that $\mu(H(\gamma)) = 1$, the score of $\nu$ is Fr\'echet differentiable along $H(\gamma)$, and that the Fr\'echet derivatives of $\d \gamma^x / \d \gamma$ are $\mu$-almost surely bounded by a $\mu$-integrable function. Then, for $\nu$-almost every $y$,
    \begin{equation}
        \E[X \mid Y = y] = R\left(D_{H(\gamma)} \log \frac{\d \nu}{\d \gamma} (y)\right).
    \end{equation}
\end{theorem}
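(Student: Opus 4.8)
The plan is to evaluate $\E[X\mid Y=y]$ directly from Bayes' rule in infinite dimensions and to recognize the resulting expression as the Riesz transform of a Fr\'echet derivative. Write $\mu$ for the law of $X$ and $\nu$ for the law of $Y=X+N$, where $N\sim\gamma$ is independent of $X$, so that the conditional law $Y\mid X=x$ is the translate $\gamma^x:=\gamma(\cdot-x)$. Since $\mu(H(\gamma))=1$, the Cameron--Martin theorem gives $\gamma^x\ll\gamma$ for $\mu$-a.e.\ $x$ with density $p(x,y):=\tfrac{\d\gamma^x}{\d\gamma}(y)=\exp\!\big(\langle y,x\rangle_{H(\gamma)}-\tfrac12\|x\|_{H(\gamma)}^2\big)$, the pairing $\langle y,\cdot\rangle_{H(\gamma)}$ read in the Paley--Wiener (measurable-linear-functional) sense; hence $\nu\ll\gamma$ with $Z(y):=\tfrac{\d\nu}{\d\gamma}(y)=\int_B p(x,y)\,\d\mu(x)$. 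Disintegrating the joint law of $(X,Y)$ on the Polish space $B$ and using the joint measurability of $p$, the regular conditional law of $X$ given $Y=y$ has $\mu$-density $p(x,y)/Z(y)$, so the first step records
\[
    \E[X\mid Y=y]=\frac{1}{Z(y)}\int_B x\,p(x,y)\,\d\mu(x),
\]
a Bochner integral in $B$ that is finite for $\nu$-a.e.\ $y$ because the embedding $H(\gamma)\hookrightarrow B$ is continuous ($\|x\|_B\le c\|x\|_{H(\gamma)}$) and the stated hypothesis dominates $\|x\|_{H(\gamma)}\,p(x,y)$ by a $\mu$-integrable function.

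Next I would differentiate the kernel along $H(\gamma)$. For fixed $x\in H(\gamma)$ and direction $h\in H(\gamma)$ the map $t\mapsto p(x,y+th)$ is smooth with $t$-derivative $\langle h,x\rangle_{H(\gamma)}\,p(x,y)$ at $t=0$, where along $H(\gamma)$ the pairing is a genuine inner product; thus the Fr\'echet derivative along $H(\gamma)$ is $D_{H(\gamma)}p(x,\cdot)(y)=p(x,y)\,R^{-1}x\in H(\gamma)^*$, with $R^{-1}:H(\gamma)\to H(\gamma)^*$, $x\mapsto\langle x,\cdot\rangle_{H(\gamma)}$, the inverse of the Riesz map $R$, and $\|D_{H(\gamma)}p(x,\cdot)(y)\|_{H(\gamma)^*}=\|x\|_{H(\gamma)}\,p(x,y)$. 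I would then invoke a differentiation-under-the-integral (Leibniz) theorem for $H(\gamma)^*$-valued maps, with the interchange and the upgrade from directional to Fr\'echet differentiability licensed precisely by the domination hypothesis on these derivative norms, to conclude that $Z$ is Fr\'echet differentiable along $H(\gamma)$ with
\[
    D_{H(\gamma)}Z(y)=\int_B p(x,y)\,R^{-1}x\,\d\mu(x)\in H(\gamma)^*.
\]

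Finally, since $R:H(\gamma)^*\to H(\gamma)$ is a bounded (isometric) linear operator it commutes with Bochner integration and satisfies $RR^{-1}=\mathrm{Id}_{H(\gamma)}$, so $R\big(D_{H(\gamma)}Z(y)\big)=\int_B x\,p(x,y)\,\d\mu(x)$; combining this with the chain rule $D_{H(\gamma)}\log Z(y)=Z(y)^{-1}D_{H(\gamma)}Z(y)$, valid for $\nu$-a.e.\ $y$ where $Z(y)>0$, and dividing by $Z(y)$ yields $R\big(D_{H(\gamma)}\log\tfrac{\d\nu}{\d\gamma}(y)\big)=\E[X\mid Y=y]$, which is the claim. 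As a consistency check I would verify that when $B=\mathbb{R}^n$ and $\gamma=\N(0,\sigma^2 I)$ this reduces to the classical $\E[X\mid Y=y]=y+\sigma^2\nabla\log p_\nu(y)$, the extra $y$ arising because one differentiates $\log(\d\nu/\d\gamma)$ rather than a (generally nonexistent) Lebesgue log-density.

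The step I expect to be the main obstacle is the second one: rigorously interchanging Fr\'echet differentiation and integration for Banach-space-valued integrands and upgrading directional/G\^ateaux differentiability to genuine Fr\'echet differentiability---this is exactly what the technical hypothesis about the $\mu$-a.s.\ domination of the Fr\'echet derivatives of $\d\gamma^x/\d\gamma$ is designed to provide. Secondary care is needed to establish joint measurability and Bochner integrability of the $B$- and $H(\gamma)^*$-valued integrands and to handle the fact that $y\mapsto\langle y,x\rangle_{H(\gamma)}$ is only a $\gamma$-measurable (not continuous) linear functional on $B$ when $x\in H(\gamma)$; restricting all differentiation to directions in $H(\gamma)$, where the pairing is a true inner product, is what keeps that issue at bay.
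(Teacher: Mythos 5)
Your proposal is correct and follows essentially the same route as the paper's proof in \Cref{app:proof}: establish $\d\nu/\d\gamma$ as a mixture of Cameron--Martin densities, express $\E[X\mid Y=y]$ as the correspondingly reweighted Bochner integral, differentiate under the integral sign using the domination hypothesis, and identify the result via the Riesz map (the paper's $R\circ J=C_\gamma$ plays the role of your $RR^{-1}=\mathrm{Id}$). The only cosmetic difference is that you obtain the conditional-expectation formula by invoking disintegration on a Polish space, whereas the paper verifies it directly against the defining property of conditional expectation over $Y$-measurable events; both are standard and lead to the same expression.
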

\begin{proof}
See \Cref{app:proof}.
\end{proof}

\linespread{0.98}
\parskip=5pt
\newcommand{\zerodisplayskips}{%
  \setlength{\abovedisplayskip}{1mm}%
  \setlength{\belowdisplayskip}{1mm}%
  \setlength{\abovedisplayshortskip}{0.5mm}%
  \setlength{\belowdisplayshortskip}{0.5mm}}
\appto{\normalsize}{\zerodisplayskips}
\appto{\small}{\zerodisplayskips}
\appto{\footnotesize}{\zerodisplayskips}

\subsection{FunDPS: Function-Space Diffusion Posterior Sampling}

With the analytically tractable likelihood approximation, we can now proceed to the posterior sampling step. We will introduce the guidance term for the reverse diffusion process by defining a differentiable likelihood term given a measurement function. The FunDPS algorithm follows, with special attention to some design details of our implementation.

\paragraph{Guidance}
With the extended Tweedie's formula, we now have a closed form representation of $\hat \ab_0(\ab_t)=\mathbb{E}[\ab_0 \mid \ab_t]$ given by \Cref{theorem:tweedie_main}.
It becomes differentiable when we use a neural operator to simulate the score function, enabling us to compute the likelihood function given in \Cref{eq:likelihood}.

In practice, our observations $\ub$ will be finite dimensional. We also assume that the covariance $C_\eta$ of the observation noise is approximately uncorrelated, such that $C_\eta^{-1}$ can be well-approximated by a scaled version $C_\eta^{-1} \approx c I$ of the identity.
Plugging this and \Cref{eq:likelihood} into \Cref{eq:bayes_decompose} with the trained score operator $\Db_\theta$, we can now formalize the conditional score operator, acting as the guidance for posterior sampling
{\small\begin{align}
    \nabla_{\ab_t} \log \frac{\d \nu_t^{\ub}}{\d \gamma_t}(\ab_t) = \nabla_{\ab_t} \log \frac{\d \nu_t}{\d \gamma_t}(\ab_t) + \nabla_{\ab_t} \tilde{\Phi}_t(\ab_t, \ub) \approx \Db_\theta(\ab_t,t) - \tfrac{c}{2} \nabla_{\ab_t} \lVert \ub - \Ab(\hat{\ab}_0) \rVert_{\mathcal{U}}^2.
\end{align}}

When there are multiple types of observation, we further define the total log likelihood as a weighted sum of individual observation log likelihoods. This formulation introduces a vector of hyperparameters $\boldsymbol{\zeta}$, where each component scales its corresponding likelihood term. These weights can be interpreted as confidence measures for observations. For simplicity, we also absorb $\tfrac{c}{2}$ into $\boldsymbol{\zeta}$.

Algorithm \ref{alg:DIS_FS} depicts this reverse diffusion process, where we iteratively update the samples $\ab_{i}$ using the consistency between the given measurement $\ub$ and the one obtained from the denoiser, i.e., $\Ab(\Db_{\theta}(\ab_i, t_i))$. Specifically, we propose the update rule
{\small \begin{equation}
    \ab_{i+1} \gets \ab_{i} - \boldsymbol{\zeta} \cdot \nabla_{\ab_i}\|\ub - \Ab(\Db_{\theta}(\ab_i, t_i))\|_\Uc^2,
\label{eq:grad}
\end{equation}}
where $\boldsymbol{\zeta}$ is a predefined guidance weight as above. %

\paragraph{Joint embedding} In practice, the solution and the parameter functions are typically measured by sensors and can only be partially observed~\citep{yang2021seismic}. Hence, we focus on the general case where the operator $\Ab$ is an arbitrary masking operator, specifying the coordinates of the sensors, and the function $\ab$ \emph{jointly} represents the PDE parameters and solution. Our goal is learning the prior distribution of both initial and solution states via diffusion models. We then recover classical inverse problems by fully masking the coefficients. On the other hand, forward problems correspond to reconstructing the masked full solution from partial observations of the parameters.
By partially masking either channel, sparse reconstruction problems are resolved.

\paragraph{Multi-resolution training}
Inspired by other works on operator learning~\cite{zhao2022incremental,lanthaler2024discretization}, we introduce a new training technique to learn the prior distribution with reduced computational costs. We first train the diffusion model on low-resolution data for a majority of the epochs and only train on higher resolution for the final epochs. This curriculum learning approach guides the model to efficiently learn coarser information at the earlier stages of training and finer high-frequency details in the later stages. Due to the discretization invariance of neural operators, the resulting model exhibits similar performance as training only on high resolutions, almost at the cost of low-resolution training.

\paragraph{Multi-resolution inference} 
Beyond reducing steps, inference time per step can be accelerated by initially processing at a lower resolution and progressively increasing it. 
This aligns with the concept of diffusion models generating autoregressively in resolution or frequency space \cite{dieleman2024spectral}.
We propose ReNoise, a bi-level multi-resolution inference method.
Initially, samples are processed at a lower resolution. Subsequently, they are upscaled, and inspired by ReSample\cite{song2024solving}, additional noise is introduced to mitigate artifacts and noise level mismatch arising from upscaling, followed by a few denoising steps at the target resolution. 
This enables us to achieve equivalent accuracy in half the computing time.
We illustrate the pipeline in \Cref{fig:model_comparison}b with implementation details in \Cref{app:multires-inference}.

\begin{figure}[t]
    \centering
    \includegraphics[width=\textwidth]{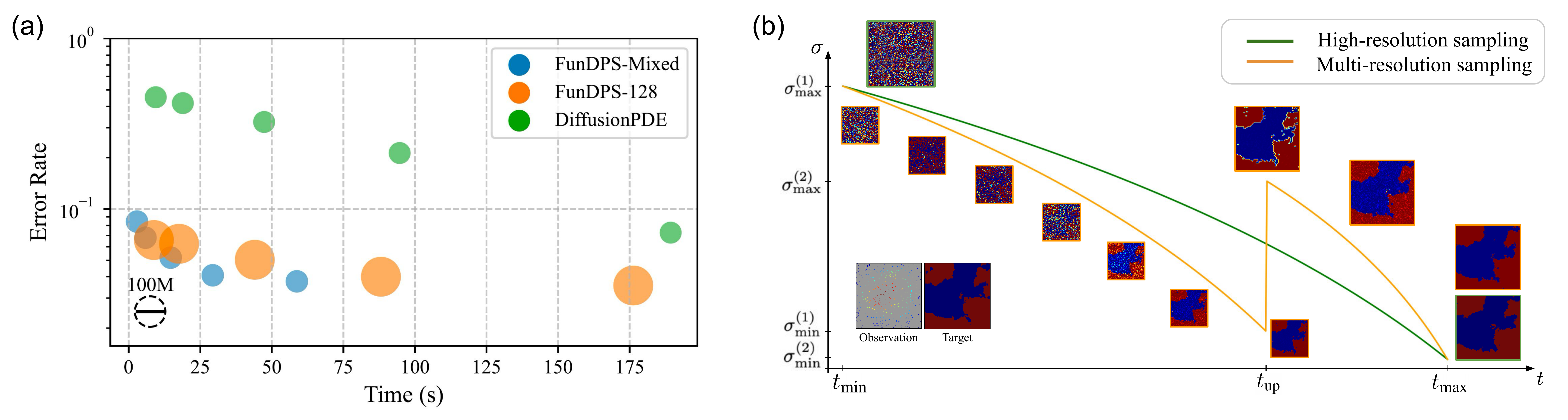}
    \caption{(a) Comparison of FunDPS and DiffusionPDE in terms of accuracy and inference time with varying step sizes; (b) Demonstration of multi-resolution inference pipeline. $\sigma$ is in log-scale.}
    \label{fig:model_comparison}
\end{figure}

\section{Experiments}
We will now showcase the efficiency, robustness, and multi-resolution generalizability of our proposed FunDPS through various tasks. Additional and ablation experiments can be found in \Cref{app:additional}. %

\subsection{Experimental Setup}
\paragraph{Datasets \& Tasks} We validate our approach by solving both forward and inverse problems on five different PDE problems. 
The problems span a range of difficulties, characterized by complex input distributions arising from the multi-scale features of GRFs, varying boundary conditions (Dirichlet and periodic), and nonlinear patterns inherent in the Navier-Stokes equations. We provide a detailed description of each PDE in~\Cref{sec:pde}. The objective is to solve forward and inverse problems in sparse sensor settings. For forward tasks, we only observe 3\% of the initial condition before reconstructing the entire solution. Conversely, for inverse tasks, we only observe 3\% of the final state before reconstructing unknown initial or coefficient fields.

\paragraph{Training \& Inference} 
We base our code on EDM-FS~\cite{Christopher2024neuraloperator}.
In particular, we adopt a U-shaped neural operator architecture \cite{rahman2022uno} as the denoiser $\Db_\theta$ and modify the noising process according to the discussion above. 
Our denoiser has 54M parameters, which is similar to DiffusionPDE's network size. 
The implementation is detailed in \Cref{app:exp-detail}.
The guidance weight $\zeta$ in \Cref{eq:grad} is tuned on a small validation set, provided in~\Cref{app:guidance}.

\paragraph{Baselines} Baseline comparisons include well-known deterministic PDE solvers such as FNO \cite{li2020fno}, PINO \cite{li2024physics}, DeepONet \cite{lu2021learning} and PINN \cite{raissi2019physics}, as well as the state-of-the-art diffusion-based approach, DiffusionPDE \cite{huang2024diffusionpde}. We had correspondence with DiffusionPDE's authors about its reproducibility issues. We use our reproduced results for comparison. Please refer to the \Cref{app:reproduce} for details.

\subsection{Results}

\paragraph{Main results} We evaluate our method on both forward and inverse problems across five PDE tasks with sparse observations. Results are shown in \Cref{tab:main_table_results}. Even with severe occlusions, where only $3\%$ of function points are visible, we are able to reconstruct both states effectively. Our approach always achieves the best results among all the baselines with the lowest error rates, surpassing all baselines with 32\% higher accuracy with one-fourth steps. We provide qualitative results in \Cref{fig:teaser} and \Cref{app:qualitative}. We also compared all methods on classical fully-observed forward and inverse problems, where FunDPS also showed superior performance as in \Cref{tab:fully_obs_results}. Further, we test our framework with diverse guidance methods to show its adaptability (\Cref{fig:poisson-plugs} and \Cref{tab:poisson-plugs}). We also conducted an ablation study on the effectiveness of our design choices in \Cref{tab:arch_ablation}. Additionally, we provide results by using FDM as forward operator and compare with joint learning in~\Cref{app:fdm}.

\paragraph{Inference speed}
We compare the inference time of our method with the diffusion-based solver, DiffusionPDE, as shown in \Cref{fig:model_comparison}. With a 200-step discretization of the reverse-time SDE, our method achieved superior accuracy with only one-tenth of the integration steps compared to DiffPDE.
When we increase the number of steps, FunDPS further reduces the error, which suggests that the scaling law of inference time may hold for solving PDE problems with guided diffusion. This highlights the effectiveness of our function space formulation and guidance mechanism.

Regarding the actual run time, our model averages 15s/sample for 500 steps (without multi-resolution inference technique) on a single NVIDIA RTX 4090 GPU, while DiffusionPDE takes 190s/sample for 2000 steps on the same hardware and the same $128 \times 128$ discretization. This superior performance can be attributed to two factors: our efficient implementation increases our steps per second, and we require significantly fewer steps than pixel-space models.

\paragraph{Multi-resolution training}
For our main results, we employed a two-phase training strategy: training primarily on low-resolution data before switching to high resolution in the final epochs. As shown in~\Cref{tab:multires_results}, this approach achieves comparable accuracy to models trained solely on high-resolution grids, while requiring only $25\%$ of the GPU hours. 

\begin{wrapfigure}[15]{r}{0.5\textwidth}
\vspace{-3em}
    \centering
    \includegraphics[width=0.48\textwidth]{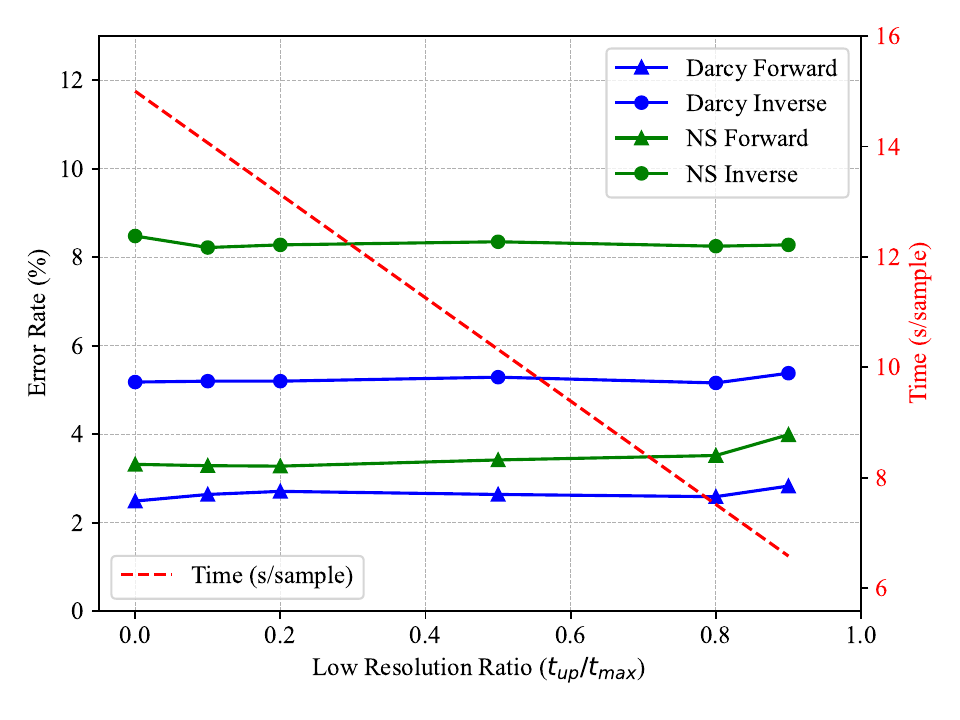}
    \vspace{-1em}
    \caption{Comparison of the accuracy of FunDPS with ReNoise under different ratios of low-resolution inference steps.}
    \label{fig:speed_vs_error_rate}
\end{wrapfigure}

\paragraph{Multi-resolution inference}
Thanks to its multi-resolution nature, neural operators can be trained on multiple resolutions and applied to data of different resolutions.
We can reduce significant inference time by performing most sampling steps at low resolution and upsampling only near the end to finalize high-frequency details.
We found that upsampling in the middle of the diffusion process worked poorly, which we attribute to the difficulty of preserving GRF properties during upscaling.
Hence, we propose a bi-level sampling process, ReNoise, that mitigates the upsampling artifacts by adding noise for improved correction potential.
The implementation details are given in \Cref{app:multires-inference}.

We provide the multi-resolution inference results in \Cref{fig:speed_vs_error_rate}.
With $80\%$ of steps performed at low resolution, ReNoise sustains similar accuracy, yielding a further 2x speed improvement to 7.5s/sample--25 times faster than DiffusionPDE.

\begin{table*}[t]
    \centering
    \caption{\textbf{Comparison of different models on five PDE problems} (in $L^2$ relative error)}
    \setlength{\tabcolsep}{0.3em} 
    \resizebox{\textwidth}{!}
    {%
    \begin{tabular}{lccccccccccc}
    \toprule
      & \multirow{2}{*}{Steps $(N)$} & \multicolumn{2}{c}{Darcy Flow} 
      & \multicolumn{2}{c}{Poisson} 
      & \multicolumn{2}{c}{Helmholtz} 
      & \multicolumn{2}{c}{Navier-Stokes}
      & \multicolumn{2}{c}{Navier-Stokes (BCs)} \\
    \cmidrule(lr){3-4}
    \cmidrule(lr){5-6}
    \cmidrule(lr){7-8}
    \cmidrule(lr){9-10}
    \cmidrule(lr){11-12}
      & & Forward & Inverse & Forward & Inverse & Forward & Inverse & Forward & Inverse & Forward & Inverse \\
    \midrule
    FunDPS (ours)
      & 200 & 2.88\% & 6.78\% 
      & 2.04\% & 24.04\% 
      & 2.20\% & 20.07\% 
      & 3.99\% & 9.87\% 
      & 5.91\% & 4.31\% \\
    FunDPS (ours)
      & 500 & \textbf{2.49\%} & \textbf{5.18\%}
      & \textbf{1.99\%} & \textbf{20.47\%} 
      & \textbf{2.13\%} & \textbf{17.16\%} 
      & \textbf{3.32\%} & \textbf{8.48\%} 
      & \textbf{4.90\%} & \textbf{4.08\%} \\
    \midrule
    DiffusionPDE%
      & 2000 & 6.07\% & 7.87\% 
      & 4.88\% & 21.10\% 
      & 12.64\% & 19.07\% 
      & 3.78\% & 9.63\% 
      & 9.69\% & 4.18\% \\
    FNO
      & - & 28.2\% & 49.3\%
      & 100.9\% & 232.7\%
      & 98.2\% & 218.2\%
      & 101.4\% & 96.0\%
      & 82.8\% & 69.6\% \\
    PINO
      & - & 35.2\% & 49.2\% 
      & 107.1\% & 231.9\%
      & 106.5\% & 216.9\%
      & 101.4\% & 96.0\%
      & 81.1\% & 69.5\% \\
    DeepONet
      & - & 38.3\% & 41.1\% 
      & 155.5\% & 105.8\%
      & 123.1\% & 132.8\%
      & 103.2\% & 97.2\%
      & 97.7\% & 91.9\% \\
    PINN
      & - & 48.8\% & 59.7\% 
      & 128.1\% & 130.0\%
      & 142.3\% & 160.0\%
      & 142.7\% & 146.8\%
      & 100.1\% & 105.5\% \\
    \bottomrule
    \end{tabular}
    }
    \label{tab:main_table_results}
    \vspace{-1em}
\end{table*}

\section{Conclusion}
\vspace{-1em}
We introduce a novel discretization-agnostic generative framework for solving inverse problems in function spaces. Our framework supports sampling from posterior distributions with generalizability across different resolutions.
We provide theoretical foundations for our framework with the extension of Tweedie's formulation to function spaces.
We verified our approach with various settings and PDEs, achieving 32\% higher accuracy than baselines while reducing time by an order of magnitude.

\paragraph{Limitations}
Incorporating PDE loss in FunDPS yields only marginal improvements over pure observation-based guidance. We attribute this partially to numerical errors introduced by finite difference approximations of PDE operators. Second, our approach still requires manual tuning of guidance weights for different problems. Future work could explore improved techniques to better preserve the continuous nature of the PDEs and include an adaptive guidance scheme.

\paragraph{Outlook}
While we focused on sparse spatial observations, our framework could naturally extend to temporal observations in time-dependent PDEs, enabling spatiotemporal evolution from limited measurements. Second, function-space diffusion models could serve as a unifying methodology for diverse physical systems--enabling foundation models that can be trained once on multiple PDE and domain types and then adapted to various downstream tasks. 
Lastly, while FunDPS demonstrates promise for general PDE-based inverse problems, its performance on specialized inverse tasks such as MRI reconstruction \cite{zbontar2018fastMRI} and full waveform inversion \cite{deng2023openfwilargescalemultistructuralbenchmark} remains to be seen.

\clearpage

\section*{Acknowledgments}
The authors want to thank Christopher Beckham for his efficient implementation of diffusion operators. Our thanks also extend to Jiahe Huang for her discussion on DiffusionPDE, and Yizhou Zhang for his assistance with baselines.
Anima Anandkumar is supported in part by Bren endowed chair, ONR (MURI grant N00014-23-1-2654), and the AI2050 senior fellow program at Schmidt Sciences.

\bibliography{refs}
\bibliographystyle{unsrtnat}

\newpage
\appendix
\onecolumn

\newtheorem{prop}{Proposition}

\section{Diffusion Models in Function Spaces}
\label{app:diff_prior}

\subsection{Forward Diffusion with Gaussian Random Fields}

\textit{Infinite-dimensional data} (such as functions defined on a continuous domain) require a careful definition of the diffusion process on them. We adopt a measure-theoretic approach following recent score-based generative models in function spaces~\cite{lim2023scorefs}. Let $H$ be a separable Hilbert space of functions (the function space of interest), and let $\mu$ be the data distribution on $H$ (a probability measure over $H$ for our training data). In infinite dimensions, there is no Lebesgue density, so we work relative to a reference Gaussian measure. We introduce a centered Gaussian \textit{prior} measure $\mathcal{N}(0, \mathbf{C})$ on $H$, with covariance operator $\mathbf{C}$ chosen to be self-adjoint, non-negative, and trace-class (so that $\mathcal{N}(0, \mathbf{C})$ is a well-defined GRF measure). We refer to $\mathcal{N}(0, \mathbf{C})$ as the GRF prior measure.

\paragraph{Noising process} Given a sample $a \sim \mu$, we perturb it by adding an independent Gaussian random function (drawn from the GRF prior) with appropriate variance. In other words, for a noise scale $\sigma \ge 0$, we define the noisy function at level $\sigma$ as
\begin{equation}
    a_{\sigma} \;=\; a + \eta,
\end{equation}
where $\eta \sim \mathcal{N}(0,\sigma^2 \mathbf{C})$ is a Gaussian random element in $H$ with covariance $\sigma^2 \mathbf{C}$. As $\sigma$ increases, more Gaussian noise is added to the function. For $\sigma=0$, we have $a_0 = a$ (no noise), and at large $\sigma$, $a_{\sigma}$ is dominated by noise (In fact as $\sigma \to \infty$, $a_{\sigma}$ approaches a draw from the zero-mean GRF prior). This construction defines the forward diffusion process in function space in a \textit{distributional} sense: it transforms the data distribution $\mu$ into a family of perturbed distributions $\mu_{\sigma}$, where $\mu_{\sigma}$ is the law of $a_{\sigma} = a + \eta$. Equivalently, $\mu_{\sigma}$ is the convolution of $\mu$ with the Gaussian measure $\mathcal{N}(0,\sigma^2 \mathbf{C})$. By varying $\sigma$ from 0 to some large value, we obtain a continuum (or a discrete set) of distributions bridging $\mu_{0}=\mu$ and an almost pure noise distribution (when $\sigma$ is high). This is analogous to the forward noising process in standard diffusion models, but defined on an infinite-dimensional function space via GRFs.

\subsection{Score Function and Denoising Objective}
With the forward process defined, we now consider the \textit{score function} on the function space. At a given noise level $\sigma$, the score is defined as the gradient of the log-density of the perturbed distribution $\mu_{\sigma}$. In our infinite-dimensional setting, this gradient is understood with respect to the Gaussian reference measure (the GRF prior). Formally, for each $\sigma$, we define the score function $s(a,\sigma)$ as the $H$-valued gradient of $\log p_{\sigma}(a)$, where $p_{\sigma}$ is the density of $\mu_{\sigma}$ relative to $\mathcal{N}(0,\sigma^2 \mathbf{C})$. 
Intuitively, $s(a,\sigma)$ points in the direction in $H$ that \textit{increases} the likelihood of $a$ under $\mu_{\sigma}$ the most. 
In finite dimensions, this recovers $\nabla_a \log p_{\sigma}(a)$; here we assume $s(a,\sigma)$ exists as an element of $H$~\cite{lim2023scorefs}.

\paragraph{Denoising score matching} In practice, $s(a,\sigma)$ is unknown because the true data distribution $\mu$ (and hence $\mu_{\sigma}$) is unknown. Instead of trying to directly estimate the score, we train a \textit{denoising model} $D_{\theta}(a_{\sigma}, \sigma)$ to recover the underlying clean function $a$ from a noisy sample $a_{\sigma}$. This is called \textit{denoising score matching}, which is particularly convenient in function spaces. Following the EDM~\cite{karras2022elucidating} training strategy, we sample pairs of clean and noisy functions and train $D_{\theta}$ to predict the clean input. In particular, given a sample $a \sim \mu$ and its noisy version $y = a + \eta$ with $\eta \sim \mathcal{N}(0,\sigma^2 \mathbf{C})$, we train $D_{\theta}$ to output $a$ (the ground truth) when given $(y,\sigma)$ as input. The training objective, averaged over the data distribution and noise, is formulated as a weighted mean squared error:
\begin{equation}
    L(\theta) \;=\; \mathbb{E}_{\,a\sim \mu,\; \eta \sim \mathcal{N}(0,\sigma^2 \mathbf{C})\,}\Big[\, \lambda(\sigma)\,\|\,D_{\theta}(a+\eta,\; \sigma)\;-\;a\,\|_H^2\Big],
\end{equation}
where $\lambda(\sigma)$ is a positive weighting function that balances the loss contributions across different noise levels. \citet{lim2023scorefs} shows that minimizing this \textit{denoising objective} for all $\sigma$ is equivalent to learning the true score function on $H$. In fact, there is an explicit relationship between the optimal denoiser and the score operator in function space, analogous to Tweedie’s formula. Given a noisy observation $y = a+\eta$ at scale $\sigma$, the score can be written as:
\begin{equation}
    s(y,\sigma) \;=\; \frac{D_{\theta}(\,y,\sigma\,)\;-\;y}{\sigma^2}\,.
\end{equation}

Thus, by training the model to minimize $L(\theta)$ across many noise levels, we are effectively teaching it to approximate the score operator in $H$. We implement the above training with a \textit{discrete noise level schedule} and random sampling of noise intensities, similar to the EDM methodology (Algorithm~\ref{alg:uncond_training}). We randomly sample $\sigma$ noise levels log-uniformly  over $[\sigma_{\min},\, \sigma_{\max}]$ as in~\citet{karras2022elucidating}. As a result, $D_{\theta}$ becomes a function-space \textit{denoiser} that can gradually refine a noisy input at any noise scale within the range.

\begin{algorithm}[ht!]
\caption{FunDPS Training (Training an unconditional diffusion model in function spaces)}
\begin{algorithmic}[1]
\REQUIRE {Data distribution $\mu$, GRF prior covariance $C$, noise-level distribution $p(\sigma)$}
\STATE Initialize model parameters $\theta$
\REPEAT {
    \STATE $a \sim \mu$,\; $\sigma \sim p(\sigma)$\; \hfill \COMMENT{Draw clean function and noise level}
    \STATE $\eta \sim \mathcal{N}(0, \sigma^2 \mathbf{C})$\; \hfill \COMMENT{Sample GRF noise}
    \STATE $y \leftarrow a + \eta$\; \hfill \COMMENT{Construct noisy sample}
    \STATE $\hat{a} \leftarrow D_{\theta}(y, \sigma)$\; \hfill \COMMENT{Compute denoised prediction}
    \STATE $L \leftarrow \lambda(\sigma)\| \hat{a}-a \|_H^2$\; \hfill \COMMENT{Compute training loss}
    \STATE Update parameters $\theta$ by minimizing $L$\;
} \UNTIL {\textit{converged}}
\STATE \textbf{return} {$D_{\theta}$}
\end{algorithmic}\label{alg:uncond_training}
\end{algorithm}
\vspace{-1em}

\subsection{Reverse Diffusion and Sampling}

Once the denoising model (score model) is trained, we can generate new function samples from the learned distribution by running the diffusion process in reverse – starting from noise and iteratively \textit{removing} noise. The key idea is to start with an initial random field drawn from the prior and then repeatedly apply the denoiser $D_{\theta}$ while decreasing $\sigma$ in stages. Here we adopt a discrete reverse diffusion approach aligned with EDM’s deterministic solver. First, we choose a high noise level $\sigma_{\max}$ (e.g. the upper bound used in training) and sample an initial function $a_{N} \sim \mathcal{N}(0,;\sigma_{\max}^2, \mathbf{C})$, i.e. a pure noise sample in $H$ drawn from the GRF prior. Then we define decreasing sequence of noise levels $\sigma_{\max} = \sigma_N > \sigma_{N-1} > \cdots > \sigma_1 > \sigma_0 \approx 0$, where the levels are spaced polynomially so that adjacent levels have small differences in terms of signal-to-noise ratio as in EDM~\cite{karras2022elucidating}. 
The noise scheduler spans from the highest noise to zero ($\sigma_{\max}$ corresponds to the prior and $\sigma_0=0$ corresponds to a clean sample). At each step $i=N, N-1, \dots,1$, given the current noisy sample $a_{i}$ at noise level $\sigma_i$, we apply the denoiser to get $\hat{a} = D_{\theta}(a_{i}, \sigma_i)$, the model’s estimate of the clean underlying function. It is followed by adding the $\sigma_{i-1}$ noise level to reach the next step's sample $a_{i-1}$ along with higher-order updates. After iterating down to the final level $\sigma_0 \to 0$, we obtain $a_{0}$, which is an approximate sample from the original data distribution $\mu$ (since no noise remains). We can then use $a_{0}$ as a newly generated function sample drawn from the learned generative model. 
We implement our framework using a deterministic sampler based on Euler's $2^\text{nd}$ order method. 
Algorithm~\ref{alg:DIS_FS} without the FunDPS guidance corresponds to this unconditional sampling procedure.

\section{Pseudocode for FunDPS}
\begin{algorithm}[H]
\caption{FunDPS Sampler}
\begin{algorithmic}[1]
\REQUIRE {Observation ${\ub}$, forward operator $\Ab$, denoising diffusion operator $\Db_{\theta}$, variance schedule $\{\sigma(t_i)\}_{i=0}^{N}$ with $\sigma(t_0)=0$, guidance weights $\boldsymbol{\zeta}$.} \vspace{0.25em}
\STATE $\ab_N \sim \mathcal{N}(0, \mathbf{C})$ \COMMENT{\textit{Initialize $\ab$ from GRF}}
\FOR {$i = N$ to $1$}
    \STATE $\hat\ab_0 \gets \Db_{\theta}(\ab_i,\sigma(t_i))$ \hfill \COMMENT{\textit{Estimate $\ab_0$ by Tweedie's formula}}
    \STATE $\db_i \gets (\ab_i - \hat\ab_0) / \sigma(t_i)$ \hfill \COMMENT{\textit{Evaluate $\mathrm{d}\ab / \mathrm{d}t$ at $t_i$}}
    \STATE $\ab_{i-1} \gets \ab_i + (\sigma(t_{i-1}) - \sigma(t_i))\db_i$ \hfill \COMMENT{\textit{Take Euler step from $\sigma(t_i)$ to $\sigma(t_{i-1})$}}
    \IF{$\sigma(t_{i-1}) \neq 0$}
        \STATE $\hat\ab_0' \gets \Db_{\theta}(\ab_{i-1},\sigma(t_{i-1}))$
        \STATE $\db_i' \gets (\ab_{i-1} - \hat\ab_0') / \sigma(t_{i-1})$
        \STATE $\ab_{i-1} \gets \ab_i + (\sigma(t_{i-1}) - \sigma(t_{i}))(\frac{1}{2}\db_i + \frac{1}{2}\db_i')$ \hfill \COMMENT{\textit{Apply $2^{\text{nd}}$-order correction}}
    \ENDIF
    \STATE \textcolor{custom_highlight}{$\ab_{i-1} \gets \ab_{i-1} - \boldsymbol{\zeta} \cdot \nabla_{\ab_{i}}\|\ub - \Ab(\hat\ab_0')\|_\Uc^2$} \hfill \COMMENT{\textit{Invoke the FunDPS guidance in \Cref{eq:grad}}}
\ENDFOR
\STATE \textbf{return} {$\ab_0$}
\end{algorithmic}\label{alg:DIS_FS}
\end{algorithm}

\clearpage

\section{Measure-Theoretic Tweedie's Formula}
\label{app:proof}

In this section, we prove a generalization of Tweedie's formula which holds for separable Banach spaces. This will furnish us with a link between score functions \citep{lim2023scorefs} and conditional expectations, which is a key step in enabling function-space inverse problem solvers. We begin with some preliminaries on Gaussian measures before proceeding with our proof. 

\paragraph{Notation} Throughout this section, $B$ will represent a separable Banach space. For two measures $\mu, \nu$ on $B$, we write $\mu \ll \nu$ if $\mu$ is absolutely continuous with respect to $\nu$, i.e., if $E \in \BB(B)$ is Borel measurable and $\nu(E) = 0$, then $\mu(E) = 0$. We write $\mu \sim \nu$ are equivalent if $\nu \ll \nu$ and $\nu \ll \mu$. For any $h \in B$, we will write $T_h: B \to B$ for the translation map $x \mapsto x + h$. For an arbitrary Borel measure $\mu$ on $B$ we will write $\mu^h = (T_h)_{\#}\mu = \mu(\cdot - h)$ for the translated measure.

\subsection{Banach Space Gaussian Measures}

We briefly review the key definitions necessary for our constructions. We refer to \citep{bogachev1998gaussian} for an in-depth treatment of this material. A centered Gaussian measure $\gamma$ on $B$ is a Borel probability measure such that the pushforward of $\gamma$ along any bounded linear functional $f \in B^*$ is a mean-zero Gaussian distribution on $\R$.  Note that, as $B$ is separable, 
 Fernique's theorem guarantees that we have $B^* \subset L^2(\gamma)$. The reproducing kernel Hilbert space (RKHS) $B^*_\gamma$ associated with $\gamma$ is the closure of $B^*$ with respect to the $L^2(\gamma)$ norm, i.e.,
\begin{equation}
    B^*_\gamma = \overline{ \{ f \in X^*\}^{L^2(\gamma)} }
\end{equation}
and for $f, g \in B^*_\gamma$ their inner product is $\langle f, g \rangle_{B^*_\gamma} = \int f(x)g(x) \d \gamma(x)$.

 The measure $\gamma$ is uniquely determined by its covariance operator $C_\gamma: B^*_\gamma \to B^{**}$, defined by
\begin{equation}
    C_\gamma(f)(g) = \int f(x) g(x) \d \gamma(x) \qquad \forall f \in B^*_\gamma, g \in B^*.
\end{equation}

The Cameron-Martin space $H(\gamma) \subset X$ is defined as 
\begin{equation}
H(\gamma) = C_\gamma(B^*_\gamma) = \{ C_\gamma (f) \mid f \in B^*_\gamma\}.
\end{equation}
Although $H(\gamma) \subset B^{**}$ in general, because $B$ is a separable Banach space we may take $H(\gamma) \subset B$. That is, we write $h = C_\gamma(f)$ for some $h \in B$ and $f \in B^*_\gamma$ if $f(g) = C_\gamma(f)(g)$ for all $g \in B^*$. Since every $h, k \in H(\gamma)$ are of the form $h = C_\gamma(\hat{h})$, $k = C_\gamma(\hat{k})$ for some $\hat{h}, \hat{k} \in B^*_\gamma$, the space $H(\gamma)$ has an induced inner product
\begin{equation}
    \langle h, k \rangle_{H(\gamma)} = \langle C_\gamma(\hat{h}), C_\gamma(\hat{k}) \rangle_{H(\gamma)} = \langle \hat{h}, \hat{k}\rangle_{L^2(\gamma)}.
\end{equation}

The space $H(\gamma)$ is a Hilbert space under this inner product. We will write $|h|_{H(\gamma)}$ for the associated norm. Note that we may equivalently define
\begin{equation}
    |h|_{H(\gamma)} = \sup \left\{ f(h) \mid f \in B^*, C_\gamma(f)(f) \leq 1 \right\}
\end{equation}
where the Cameron-Martin space $H(\gamma)$ may be identified as \citep[Theorem~3.2.3]{bogachev1998gaussian}
\begin{equation}
    H(\gamma) = \{ h \in X \mid |h|_{H(\gamma)} < \infty \}. 
\end{equation}

Note that the map $C_\gamma: B^*_\gamma \to H(\gamma)$ is an isometric isomorphism.  Since $H(\gamma)$ is a Hilbert space, the Riesz representation theorem furnishes us with a canonical isometry $R: H(\gamma)^* \to H(\gamma)$. Thus $H(\gamma)^* \simeq B^*_\gamma$. Somewhat more explicitly, we obtain an isometric isomorphism
\begin{equation} \label{eqn:iso_rkhs_dual}
J: B^*_\gamma \to H^* \qquad f \mapsto \langle C_\gamma(f), \cdot \rangle_{H(\gamma)}.
\end{equation}

Note further that $(R \circ J)(f) = C_\gamma(f)$ for $f \in B^*_\gamma$ gives us an isometry $R \circ J: B^*_\gamma \to H(\gamma)$.

The following shows that the Cameron-Martin space $H(\gamma)$ is precisely those directions under which we may translate the measure $\gamma$ while remaining absolutely continuous \citep[Corollary 2.4.3, Theorem 3.2.3]{bogachev1998gaussian}. Moreover, the celebrated Cameron-Martin formula allows us to explicitly calculate the associated density.

\begin{theorem}[Cameron-Martin] \label{theorem:cameron_martin}
    On a separable Banach space $B$, an element $h \in H(\gamma)$ is an element of the Cameron-Martin space if and only if $\gamma^h \sim \gamma$ are equivalent in the sense of being mutually absolutely continuous. In this case,
    \[
    \frac{\d \gamma^h}{\d \gamma}(x) = \exp \left(C_\gamma^{-1}(h)(x)- \tfrac{1}{2}|h|^2_{H(\gamma)} \right).
    \]
\end{theorem}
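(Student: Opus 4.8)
The plan is to prove both directions by passing to the Fourier transform (characteristic functional) on $B$, which uniquely determines a Borel probability measure on a separable Banach space. For the forward implication (if $h \in H(\gamma)$ then $\gamma^h \sim \gamma$ with the stated density), I will define the candidate measure $\tilde\gamma$ by $\d\tilde\gamma = \rho\,\d\gamma$ with $\rho(x) = \exp(C_\gamma^{-1}(h)(x) - \tfrac12 |h|^2_{H(\gamma)})$ and show that its characteristic functional agrees with that of $\gamma^h$; the converse (if $\gamma^h \ll \gamma$ then $h \in H(\gamma)$) I will prove by contraposition through a Borel--Cantelli argument.

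First I would record the two elementary identities underlying the forward direction. Writing $\hat h = C_\gamma^{-1}(h) \in B^*_\gamma$, so that $h = C_\gamma(\hat h)$, I use (i) $\langle f, \hat h\rangle_{L^2(\gamma)} = C_\gamma(\hat h)(f) = f(h)$ for $f \in B^*$, and (ii) $\|\hat h\|^2_{L^2(\gamma)} = |h|^2_{H(\gamma)}$, both immediate from the definition of $C_\gamma$ and the induced inner product on $H(\gamma)$. Computing directly, $\widehat{\gamma^h}(f) = \int e^{if(x+h)}\,\d\gamma(x) = e^{if(h)}\,e^{-\frac12 C_\gamma(f)(f)}$. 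For $\tilde\gamma$ I invoke the Gaussian exponential-moment formula $\int e^{g}\,\d\gamma = e^{\frac12\|g\|^2_{L^2(\gamma)}}$, valid for $g$ in the complexification of $B^*_\gamma$ since each such $g$ is a centered (complex) Gaussian variable under $\gamma$. Applying it to $g = if + \hat h$ and expanding $\int (if + \hat h)^2\,\d\gamma = -C_\gamma(f)(f) + 2i\,f(h) + |h|^2_{H(\gamma)}$ via (i)--(ii) yields $\widehat{\tilde\gamma}(f) = e^{if(h)}e^{-\frac12 C_\gamma(f)(f)} = \widehat{\gamma^h}(f)$, so $\tilde\gamma = \gamma^h$; strict positivity of $\rho$ then upgrades $\ll$ to $\sim$.

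The main obstacle is that $\hat h = C_\gamma^{-1}(h)$ lies in $B^*_\gamma$, the $L^2(\gamma)$-closure of $B^*$, and is in general not a genuine continuous functional, so $\rho$ is only defined $\gamma$-almost everywhere and the Laplace formula above must be justified by a limiting argument rather than quoted for $\hat h$ outright. I would therefore first prove the identity for $f_k \in B^*$, using Fernique's theorem to guarantee $B^* \subset L^2(\gamma)$ and hence finite exponential moments, then choose $f_k \to \hat h$ in $L^2(\gamma)$ and pass to the limit. Controlling $e^{f_k} \to e^{\hat h}$ in $L^1(\gamma)$, via the Gaussian bound $\int e^{2(f_k - \hat h)}\,\d\gamma = e^{2\|f_k - \hat h\|^2_{L^2(\gamma)}} \to 1$, is the technical heart of the forward direction.

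For the converse I argue by contraposition: assuming $h \notin H(\gamma)$, the supremum characterization $|h|_{H(\gamma)} = \sup\{ f(h) : f \in B^*,\ C_\gamma(f)(f) \le 1\} = \infty$ furnishes $f_n \in B^*$ with $\|f_n\|_{L^2(\gamma)} = 1$ and $f_n(h) \to \infty$. Under $\gamma$ each $f_n$ is $N(0,1)$, while under $\gamma^h$ it is $N(f_n(h),1)$. Passing to a subsequence along which $f_n(h)$ grows fast enough, the sets $A_n = \{f_n > f_n(h)/2\}$ satisfy $\sum_n \gamma(A_n) < \infty$ and $\sum_n \gamma^h(A_n^c) < \infty$ by the Gaussian tail bound, so two applications of the (first) Borel--Cantelli lemma give $\gamma(\limsup_n A_n) = 0$ while $\gamma^h(\limsup_n A_n) = 1$. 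This exhibits $\gamma^h \perp \gamma$, contradicting $\gamma^h \ll \gamma$; hence absolute continuity forces $h \in H(\gamma)$, and combined with the forward direction applied to both $h$ and $-h$ this completes the equivalence and the density formula.
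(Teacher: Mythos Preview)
Your proposal is correct; the argument via characteristic functionals for the forward direction and the Borel--Cantelli contraposition for the converse is the standard textbook proof of the Cameron--Martin theorem. Note, however, that the paper does not supply its own proof of this statement: Theorem~\ref{theorem:cameron_martin} is stated with a citation to \cite[Corollary~2.4.3, Theorem~3.2.3]{bogachev1998gaussian} and is used as a black box in the subsequent development (Proposition~\ref{prop:equivalence_perturbed}, Lemma~\ref{lemma:disintegration}, etc.). Your write-up essentially reconstructs Bogachev's proof, so there is no meaningful methodological divergence to discuss---you have simply filled in what the paper outsourced to the reference.
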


\subsection{Tweedie's Formula}

We now proceed to give a proof of a generalized Tweedie's formula for centered Gaussian measures on Banach spaces. Let $X \sim \mu$ be a random variable on $B$ with distribution $\mu$ and let $Z \sim \gamma$ be distributed according to an independent centered Gaussian measure with covariance $C_\gamma$. Define a new random variable
\begin{equation}
Y = X + Z
\end{equation}
whose distribution $\nu = \mu\star\gamma$ is obtained by the convolution of these two measures, i.e.,
\begin{equation}
    \nu(E) = \int_B \gamma(E - x) \d \mu(x) \qquad \forall E \in \BB(B).
\end{equation}

Observe that conditioned on a fixed value of $X = x$, we have $Y \mid X = x \sim \gamma^x$ is distributed according to a Gaussian measure $\gamma^x$ with mean $x$ and covariance $C_\gamma$.

We begin by showing that $\nu \sim \gamma$ are equivalent when $\mu$ is supported on the CM space $H(\gamma)$. This is a generalization of \citep[Theorem~1]{lim2023scorefs}, who show an analogous claim for separable Hilbert spaces. The proof is essentially the same in both cases, but we include it here for the sake of completeness.
\begin{proposition} \label{prop:equivalence_perturbed}
    Suppose $\mu(H(\gamma)) = 1$. Then, $\nu \sim \gamma$ are equivalent.
\end{proposition}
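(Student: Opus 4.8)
The plan is to reduce the equivalence $\nu \sim \gamma$ to the Cameron--Martin theorem (\Cref{theorem:cameron_martin}) by conditioning on $X$. First I would establish $\nu \ll \gamma$. For any Borel $E$ with $\gamma(E) = 0$, note that $\gamma(E - x) = \gamma^{-x}(E)$ (using the translation notation), and since $\mu(H(\gamma)) = 1$, for $\mu$-almost every $x$ we have $-x \in H(\gamma)$, so $\gamma^{-x} \sim \gamma$ by Cameron--Martin; hence $\gamma(E-x) = 0$ for $\mu$-a.e.\ $x$, and therefore $\nu(E) = \int_B \gamma(E-x)\,\d\mu(x) = 0$. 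This gives $\nu \ll \gamma$.

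Next I would handle the reverse direction $\gamma \ll \nu$, which is the less symmetric half. Suppose $\nu(E) = 0$, i.e.\ $\int_B \gamma(E-x)\,\d\mu(x) = 0$. Since the integrand is nonnegative, $\gamma(E-x) = 0$ for $\mu$-a.e.\ $x$; pick one such $x_0 \in H(\gamma)$ (possible since $\mu(H(\gamma))=1$). Then $\gamma^{-x_0}(E) = \gamma(E - x_0) = 0$, and again by Cameron--Martin $\gamma \sim \gamma^{-x_0}$, so $\gamma(E) = 0$. This yields $\gamma \ll \nu$, and combined with the first part, $\nu \sim \gamma$.

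An alternative, perhaps cleaner, route is to write the density directly. Using the Cameron--Martin formula, for each $x \in H(\gamma)$ we have $\d\gamma^x/\d\gamma(y) = \exp(C_\gamma^{-1}(x)(y) - \tfrac12 |x|_{H(\gamma)}^2)$, which is strictly positive $\gamma$-a.e. Then by a Fubini/Tonelli argument one obtains, for measurable $E$,
\begin{equation*}
    \nu(E) = \int_B \gamma^x(E)\,\d\mu(x) = \int_B \int_E \frac{\d\gamma^x}{\d\gamma}(y)\,\d\gamma(y)\,\d\mu(x) = \int_E \left( \int_B \frac{\d\gamma^x}{\d\gamma}(y)\,\d\mu(x) \right) \d\gamma(y),
\end{equation*}
so $\d\nu/\d\gamma(y) = \int_B (\d\gamma^x/\d\gamma)(y)\,\d\mu(x)$. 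This density is manifestly nonnegative, and it is strictly positive $\gamma$-a.e.\ because each integrand is strictly positive $\gamma$-a.e.\ and $\mu$ is a probability measure; a density that is $\gamma$-a.e.\ strictly positive gives $\gamma \ll \nu$ as well, hence equivalence. One should also check the density is $\gamma$-integrable (it integrates to $1$ by construction), which makes the Fubini step legitimate.

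The main obstacle is the measurability and integrability bookkeeping needed to justify the interchange of integrals: one must verify that $(x,y) \mapsto (\d\gamma^x/\d\gamma)(y)$ is jointly measurable on $H(\gamma) \times B$ and that Tonelli applies, and that $x \mapsto \gamma(E-x)$ is $\mu$-measurable. Since the excerpt notes this is essentially \citep[Theorem~1]{lim2023scorefs} transported from Hilbert to Banach spaces, I expect these points to go through verbatim, with separability of $B$ ensuring the relevant Borel structures behave well; the only genuinely Banach-specific input is that $H(\gamma) \subset B$ (rather than $B^{**}$), which the excerpt has already arranged.
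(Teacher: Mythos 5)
Your proof is correct and takes essentially the same route as the paper: both directions reduce to the Cameron--Martin theorem via the convolution identity $\nu(E) = \int_B \gamma^x(E)\, \mathrm{d}\mu(x)$, using that $\gamma^x \sim \gamma$ for $\mu$-almost every $x$ because $\mu(H(\gamma)) = 1$. Two small remarks: with the paper's convention $\gamma^h = \gamma(\cdot - h)$ one has $\gamma(E - x) = \gamma^{x}(E)$ rather than $\gamma^{-x}(E)$ (immaterial, since $H(\gamma)$ is a linear subspace), and your alternative density-based argument is precisely what the paper records separately as \Cref{lemma:disintegration}.
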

\begin{proof}
    Let $E \in \BB(B)$ be an arbitrary Borel set. Suppose that $\gamma(E) = 0$. By Theorem \ref{theorem:cameron_martin}, $\gamma^x \sim \gamma$ for $\mu$-almost every $x$, and hence $\gamma^x(E) = 0$ for $\mu$-almost every $x$. It follows that
    \[
    \nu(E) = \int_B \gamma^x(E) \d \mu(x) = 0.
    \]

    Conversely, suppose $\nu(E) = 0$. It follows that $\gamma^x(E) = 0$ for $\mu$-almost every $x$. Since $\mu(H(\gamma)) = 1$, Theorem \ref{theorem:cameron_martin} shows the measures $\gamma^x$ and $\gamma$ are almost surely equivalent and thus $\gamma(E) = 0$. 
\end{proof}

Hence, if $\mu(H(\gamma))=1$, the Radon-Nikodym theorem provides us with a Borel measurable $\phi: B \to \R$ with
\[
\frac{\d \nu}{\d \gamma}(y) = \exp(\phi(y)) \qquad \gamma\textrm{-a.e.} \; y \in B.
\]

We henceforth assume this is the case. Assume further that $\phi$ is Fr\'echet differentiable along $H(\gamma)$. The score of $\nu$ is defined as
\[
D_{H(\gamma)} \phi: B \to H(\gamma)^*.
\]
That is, $D_{H(\gamma)} \phi = D_{H(\gamma)} \log \frac{\d \nu}{\d \gamma}$ is the logarithmic derivative of the density of $\nu$ along $H(\gamma)$. The value $[D_{H(\gamma)}\phi](x)(h)$ is the derivative of $\phi$ at $x$ in the direction $h \in H(\gamma)$. We refer to \citep{lim2023scorefs} for a further discussion of this notion of a score and the differentiability assumption.

In the following lemma, we prove that the density of $\nu$ with respect to the noise measure $\gamma$ can also be understood in terms of the corresponding conditional distributions $\gamma^x$. This lemma will be used to aid our later calculations.

\begin{lemma} \label{lemma:disintegration}
Assume that $\mu(H(\gamma)) = 1$. For $\gamma$-almost every $y \in B$, we have
\begin{equation}
    \frac{\d \nu}{\d \gamma}(y) = \int_B \frac{\d \gamma^x}{\d \gamma}(y) \d \mu(x) = \E_{x \sim \mu}\left[ \frac{\d \gamma^x}{\d \gamma}(y) \right].
\end{equation}
Moreover, for $\gamma$-almost every $y \in B$,
\begin{equation}
    \frac{\d \gamma}{\d \nu}(y) = \left( \frac{\d \nu}{\d \gamma}(y) \right)^{-1}.
\end{equation}

\end{lemma}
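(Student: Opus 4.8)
The plan is to prove the two claims of Lemma~\ref{lemma:disintegration} in turn, both by unwinding the definition of the convolution measure $\nu = \mu \star \gamma$ and appealing to Proposition~\ref{prop:equivalence_perturbed} for the necessary absolute continuity.

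\textbf{First identity.} I would start from the disintegration of $\nu$ already recorded just before Proposition~\ref{prop:equivalence_perturbed}: for every Borel $E \in \BB(B)$,
\begin{equation*}
    \nu(E) = \int_B \gamma^x(E) \,\d \mu(x).
\end{equation*}
Since $\mu(H(\gamma)) = 1$, Theorem~\ref{theorem:cameron_martin} gives $\gamma^x \sim \gamma$ for $\mu$-almost every $x$, so we may rewrite $\gamma^x(E) = \int_E (\d \gamma^x / \d \gamma)(y) \,\d \gamma(y)$ for $\mu$-a.e.\ $x$. Substituting this into the displayed formula and applying the Fubini--Tonelli theorem (the integrand is non-negative, so this is justified without integrability hypotheses) yields
\begin{equation*}
    \nu(E) = \int_B \int_E \frac{\d \gamma^x}{\d \gamma}(y) \,\d \gamma(y) \,\d \mu(x) = \int_E \left( \int_B \frac{\d \gamma^x}{\d \gamma}(y) \,\d \mu(x) \right) \d \gamma(y)
\end{equation*}
for all Borel $E$. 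By Proposition~\ref{prop:equivalence_perturbed} we know $\nu \ll \gamma$, so $\d \nu / \d \gamma$ exists, and the displayed equality says that the bracketed function $y \mapsto \int_B (\d \gamma^x/\d \gamma)(y)\,\d \mu(x)$ is also a Radon--Nikodym derivative of $\nu$ with respect to $\gamma$. Since Radon--Nikodym derivatives are unique up to $\gamma$-null sets, the two agree $\gamma$-almost everywhere, which is precisely the first claimed identity (the expectation form being just notation for the same integral).

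\textbf{Second identity.} Here I would invoke the other half of Proposition~\ref{prop:equivalence_perturbed}, namely $\gamma \ll \nu$, so that $\d \gamma / \d \nu$ exists as well. The standard chain rule for Radon--Nikodym derivatives of mutually equivalent measures gives $(\d \gamma / \d \nu)(y) \cdot (\d \nu / \d \gamma)(y) = 1$ for $\nu$-almost every $y$; equivalently one writes $\d \gamma = (\d \gamma/\d\nu)\,\d\nu = (\d\gamma/\d\nu)(\d\nu/\d\gamma)\,\d\gamma$ and uses uniqueness of the density of $\gamma$ w.r.t.\ itself. Because $\nu \sim \gamma$, a $\nu$-null set is a $\gamma$-null set and vice versa, so "$\nu$-a.e." and "$\gamma$-a.e." are interchangeable; the statement is phrased for $\gamma$-a.e.\ $y$, and since $\d\nu/\d\gamma = \exp(\phi) > 0$ everywhere it is defined, the reciprocal is well-defined $\gamma$-a.e.

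\textbf{Main obstacle.} There is no deep obstacle here; the lemma is a bookkeeping step. The only points requiring care are: (i) checking that the swap of integrals is legitimate, which is fine since everything is non-negative (Tonelli); (ii) making sure the pointwise relation $\gamma^x(E) = \int_E (\d\gamma^x/\d\gamma)\,\d\gamma$ holds for $\mu$-a.e.\ $x$ and that the exceptional $x$-null set does not interfere with the outer integral, which is immediate from Theorem~\ref{theorem:cameron_martin} together with $\mu(H(\gamma)) = 1$; and (iii) keeping track of the distinction between "for all $E$" equality of measures and "a.e." equality of densities, resolved by uniqueness of Radon--Nikodym derivatives. All of these are routine.
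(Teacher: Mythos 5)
Your proposal is correct and follows essentially the same route as the paper's proof: rewrite the convolution $\nu(E)=\int_B\gamma^x(E)\,\d\mu(x)$ using the Cameron--Martin theorem to get densities $\d\gamma^x/\d\gamma$, swap integrals by Tonelli, and conclude by uniqueness of the Radon--Nikodym derivative, with the second identity following from the mutual equivalence $\nu\sim\gamma$ established in Proposition~\ref{prop:equivalence_perturbed}. Your write-up is, if anything, slightly more careful than the paper's about the a.e.\ bookkeeping.
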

\begin{proof}
    Fix a measurable $E \in \mathcal{B}(B)$. Since $\mu(H(\gamma)) = 1$, then $\mu$-almost surely we have that $\gamma^x \ll \gamma$. Hence, by the Radon-Nikodym theorem, the density $(\d \gamma^x / \d \gamma)(y)$ exists $\mu$-almost everywhere.
    
    Now, recall that $\nu = \mu \star \gamma$ is a convolution of measures, so that
    \begin{align}
    \nu(E) &= \int_B \gamma(E - x) \d \mu(x) \\
    &= \int_B \gamma^x(E) \d \mu(x) \\
    &= \int_B \int_E \frac{\d \gamma^x}{\d \gamma}(y) \d \gamma(y) \d \mu(x) \\
    &= \int_E \int_B \frac{\d \gamma^x}{\d \gamma}(y) \d \mu(x) \d \gamma(y).
    \end{align}

    where the last equality follows by Tonelli's theorem and the fact that the densities are nonnegative. By Proposition \ref{prop:equivalence_perturbed}, $\nu \ll \gamma$ and so by the Radon-Nikodym theorem the density $\d \nu / \d \gamma$ is uniquely defined up to a set of $\gamma$-measure zero. Thus,
    \begin{equation}
        \frac{\d \nu}{\d \gamma}(y) = \int_B \frac{\d \gamma^x}{\d \gamma}(y) \d \mu(x) \qquad \gamma\textrm{-a.e.} \, y\in B
    \end{equation}
    as claimed. The second claim follows because $\nu \sim \gamma$ under the assumption $\mu(H(\gamma)) = 1$.
\end{proof}

We now proceed to calculate $\E[X | Y = y]$. First, we directly calculate this using the definition of a conditional expectation. Note that this proof follows closely a calculation shown in \cite{pidstrigach2023infinite}, Appendix F.1.

\begin{proposition} \label{prop:cond_expectation_direct}
Suppose that $\mu(H(\gamma)) = 1$. Then, for $\nu$-almost every $y \in B$, the conditional expectation is given by
\begin{equation} \label{eqn:conditional_expectation}
    \mathbb{E}[X \mid Y = y] = \frac{\d \gamma}{\d \nu}(y) \E_{x \sim \mu}\left[ x \frac{\d \gamma^x}{\d \gamma}(y) \right].
\end{equation}
\end{proposition}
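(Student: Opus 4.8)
The plan is to establish \Cref{eqn:conditional_expectation} directly from the definition of conditional expectation, exploiting the absolute continuity $\nu \sim \gamma$ from \Cref{prop:equivalence_perturbed} and the disintegration identity in \Cref{lemma:disintegration}. Recall that by definition, $g(y) := \E[X \mid Y = y]$ is the ($\nu$-a.e. unique) measurable function such that $\int_E g(y)\,\d\nu(y) = \E[X \mathbbm{1}_{\{Y \in E\}}]$ for every Borel $E \in \BB(B)$ (interpreted appropriately as a Bochner integral, since $X$ is $B$-valued). So the strategy is to compute the right-hand side $\E[X \mathbbm{1}_{\{Y \in E\}}]$ by conditioning on $X$, express it as an integral over $E$ against $\d\gamma$, then convert to an integral against $\d\nu$ using the Radon-Nikodym derivative $\d\gamma/\d\nu$, and finally read off the integrand as the claimed formula.

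Concretely, first I would write, using the tower property and the fact that $Y \mid X = x \sim \gamma^x$,
\begin{equation*}
    \E[X \mathbbm{1}_{\{Y \in E\}}] = \int_B x\, \gamma^x(E)\, \d\mu(x) = \int_B x \int_E \frac{\d\gamma^x}{\d\gamma}(y)\,\d\gamma(y)\,\d\mu(x),
\end{equation*}
where the second equality uses that $\gamma^x \ll \gamma$ for $\mu$-almost every $x$ (valid since $\mu(H(\gamma))=1$ by \Cref{theorem:cameron_martin}). Next I would invoke a Fubini/Tonelli argument to swap the order of integration; since $x$ ranges over a Banach space this is a Fubini theorem for Bochner integrals, so one needs the integrability bound $\int_B \int_E |x|_B \,\frac{\d\gamma^x}{\d\gamma}(y)\,\d\gamma(y)\,\d\mu(x) < \infty$, which should follow from Fernique-type estimates together with the hypothesis (stated in \Cref{theorem:tweedie_main}) that the relevant densities are dominated by a $\mu$-integrable function — this is the same integrability assumption that makes $\E[X \mid Y = y]$ well-defined in the first place. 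After swapping, I get
\begin{equation*}
    \E[X \mathbbm{1}_{\{Y \in E\}}] = \int_E \left( \int_B x\, \frac{\d\gamma^x}{\d\gamma}(y)\, \d\mu(x) \right) \d\gamma(y) = \int_E \left( \E_{x\sim\mu}\!\left[ x\,\frac{\d\gamma^x}{\d\gamma}(y) \right] \right) \d\gamma(y).
\end{equation*}
Finally, to convert the $\d\gamma$ integral into a $\d\nu$ integral I would use $\d\gamma(y) = \frac{\d\gamma}{\d\nu}(y)\,\d\nu(y)$ (legitimate because $\gamma \ll \nu$ by \Cref{prop:equivalence_perturbed}), obtaining $\int_E \frac{\d\gamma}{\d\nu}(y)\, \E_{x\sim\mu}[x\,\frac{\d\gamma^x}{\d\gamma}(y)]\, \d\nu(y)$. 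Matching this against the defining property of conditional expectation and using uniqueness $\nu$-a.e. yields the claimed identity.

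I expect the main obstacle to be the rigorous justification of the interchange of integration (the Bochner-Fubini step) and the measurability/integrability of the vector-valued integrand $y \mapsto \E_{x\sim\mu}[x\,\frac{\d\gamma^x}{\d\gamma}(y)]$ as a $B$-valued function — establishing that this is a well-defined Bochner integral $\nu$-a.e. and that the dominating-function hypothesis genuinely suffices. The scalar manipulations (Cameron-Martin density formula, Radon-Nikodym chain rule, Tonelli for the nonnegative scalar densities) are routine given the earlier results; the care is all in the vector-valued bookkeeping, and this is presumably where the technical hypotheses in \Cref{theorem:tweedie_main} are consumed. I would also note that separability of $B$ is used here to ensure Borel measurability of $X$ and to apply Fernique's theorem, consistent with the standing assumptions.
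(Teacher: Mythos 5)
Your proposal is correct and is essentially the paper's argument: both verify the defining property of the conditional expectation using the tower property ($Y\mid X=x\sim\gamma^x$), the Cameron--Martin equivalence $\gamma^x\sim\gamma$, the equivalence $\nu\sim\gamma$ from \Cref{prop:equivalence_perturbed}, and a Fubini/Tonelli interchange --- you simply run the computation forward from $\E[X\mathbbm{1}_{\{Y\in E\}}]$, while the paper substitutes the candidate formula and verifies it, collapsing the mixture via \Cref{lemma:disintegration}. One minor point: the Bochner--Fubini step is easier than you suggest, since the inner integral is $\int_E(\d\gamma^x/\d\gamma)(y)\,\d\gamma(y)=\gamma^x(E)\le 1$, so only $\E_{x\sim\mu}\lVert x\rVert_B<\infty$ is needed; the dominating-function hypothesis of \Cref{theorem:tweedie_main} is consumed in \Cref{prop:cameron_martin_score}, not here.
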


\begin{proof}
    Write $f(y)$ for the right-hand side of \eqref{eqn:conditional_expectation} and let $A \in \sigma(Y)$ be a $Y$-measurable event. We show $\mathbb{E}_{y \sim \nu}[\mathbbm{1}_A f(y)] = \mathbb{E}_{x \sim \mu}[\mathbbm{1}_A x]$, from which the claim follows. Indeed,
    \allowdisplaybreaks
    \setlength{\jot}{0pt}

        \begin{align}
        \int_A f(y) d \nu(y) &= \int_B \int_A f(y) d \gamma^{\tilde{x}}(y) d \mu(\tilde{x}) \\
        &= \int_B \int_B \int_A x \frac{\d \gamma}{\d \nu}(y) \frac{\d \gamma^x}{\d \gamma}(y) \d \gamma^{\tilde{x}}(y) \d \mu(\tilde{x}) \d \mu(x) \\
        &= \int_B \int_B \int_A x \frac{\d \gamma}{\d \nu}(y) \frac{\d \gamma^x}{\d \gamma}(y) \frac{\d \gamma^{\tilde{x}}}{\d \gamma}(y) \d \gamma(y) \d \mu(\tilde{x}) \d \mu(x) \\
        &= \int_B  \int_A x \frac{\d \gamma}{\d \nu}(y) \frac{\d \gamma^x}{\d \gamma}(y)  \left[ \int_B \frac{\d \gamma^{\tilde{x}}}{\d \gamma}(y) \d \mu(\tilde{x}) \right] \d \gamma(y)  \d \mu(x) \\
        &= \int_B  \int_A x \frac{\d \gamma}{\d \nu}(y) \frac{\d \gamma^x}{\d \gamma}(y)  \frac{\d \nu}{\d \gamma}(y) \d \gamma(y)  \d \mu(x) \\
        &= \int_A x \left[ \int_B \frac{\d \gamma^x}{\d \gamma}(y) \d \gamma(y) \right] \d \mu(x) \\
        &= \E_{x \sim \mu}[\mathbbm{1}_A x].
    \end{align}

    which completes the proof.
\end{proof}

We now proceed to calculate the Cameron-Martin space gradient of the score. In particular, we show that it is equal to the same expression we obtained in Proposition \ref{prop:cond_expectation_direct}. This requires an assumption on the measure $\mu$ to ensure that the derivatives of $\d \gamma^x / \d \gamma$ are bounded by an integrable function in order to justify a derivative-integral exchange. Using the continuity of $C_\gamma$, the condition in \Cref{eqn:integrability} can be relaxed to finding an integrable $\psi \in L^1(\gamma)$ such that for all $y \in B$ and $\mu$-almost every $x$,
\begin{equation}
    |x|_{H(\gamma)} \exp\left( |C_\gamma^{-1}|_{B^*_\gamma} |x|_{H(\gamma)} |y|_B - \tfrac{1}{2} |x|^2_{H(\gamma)} \right) \leq \psi(x).
\end{equation}

While this condition depends on the specific choice of $\mu$, it will be satisfied when e.g. $\mu$ is compactly supported in $H(\gamma)$ or when $\mu$ has tails which decay sufficiently fast.

\begin{proposition} \label{prop:cameron_martin_score}
    Assume $\mu(H(\gamma)) = 1$ and that $\log \frac{\d \nu}{\d \gamma}$ is Fr\'echet differentiable along $H(\gamma)$. Consider the score $D_{H(\gamma)} \log \frac{\d \nu}{\d \gamma}: B \to H(\gamma)^*$. Let $R: H(\gamma)^* \to H(\gamma)$ be the Riesz isometry. Assume further that there exists a non-negative function $\psi \in L^1(\mu)$ such that for all $y \in B$ and $\mu$-almost every $x \in B$,
    \begin{equation} \label{eqn:integrability}
        \left| D_{H(\gamma)} \frac{\d \gamma^x}{\d \gamma} (y) \right|_{H(\gamma)^*} \leq \psi(x).
    \end{equation}
    
    Then, the Cameron-Martin score
    \begin{equation}
    R\left(D_{H(\gamma)} \log \frac{\d \nu}{\d \gamma}\right): B \to H(\gamma)
    \end{equation}
    is given for $\nu$-almost every $y \in B$ as
    \begin{equation}
    R\left(D_{H(\gamma)} \log \frac{\d \nu}{\d \gamma} (y)\right) = \frac{\d \gamma}{\d \nu}(y)\E_{x \sim \mu}\left[x \frac{\d \gamma^x}{\d \gamma}(y) \right].
    \end{equation}
\end{proposition}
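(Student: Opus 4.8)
The plan is to differentiate the density representation of \Cref{lemma:disintegration}, namely $\frac{\d\nu}{\d\gamma}(y)=\E_{x\sim\mu}\big[\frac{\d\gamma^x}{\d\gamma}(y)\big]$, along the Cameron--Martin space $H(\gamma)$, and then push the result through the Riesz isometry $R$. Since $\frac{\d\nu}{\d\gamma}=\exp(\phi)$ is strictly positive, the chain rule for Fr\'echet derivatives along $H(\gamma)$ gives $D_{H(\gamma)}\log\frac{\d\nu}{\d\gamma}(y)=\big(\frac{\d\nu}{\d\gamma}(y)\big)^{-1}D_{H(\gamma)}\frac{\d\nu}{\d\gamma}(y)=\frac{\d\gamma}{\d\nu}(y)\,D_{H(\gamma)}\frac{\d\nu}{\d\gamma}(y)$, where the second equality is the reciprocal-density identity from \Cref{lemma:disintegration}. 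Since $R$ is linear, it therefore suffices to prove $R\big(D_{H(\gamma)}\frac{\d\nu}{\d\gamma}(y)\big)=\E_{x\sim\mu}\big[x\,\frac{\d\gamma^x}{\d\gamma}(y)\big]$ for $\gamma$-almost every $y$; equivalence $\nu\sim\gamma$ from \Cref{prop:equivalence_perturbed} then turns this into the asserted $\nu$-a.e.\ statement.

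First I would compute the derivative of each integrand pointwise in $x$. For fixed $x\in H(\gamma)$, the Cameron--Martin theorem (\Cref{theorem:cameron_martin}) gives $\frac{\d\gamma^x}{\d\gamma}(y)=\exp\big(C_\gamma^{-1}(x)(y)-\tfrac12|x|_{H(\gamma)}^2\big)$; since $C_\gamma^{-1}(x)\in B^*_\gamma$ acts as a measurable linear functional, the map $y\mapsto C_\gamma^{-1}(x)(y)$ has directional derivative $C_\gamma^{-1}(x)(h)$ in the direction $h\in H(\gamma)$, and using that $C_\gamma\colon B^*_\gamma\to H(\gamma)$ is an isometric isomorphism one identifies $C_\gamma^{-1}(x)(h)=\langle x,h\rangle_{H(\gamma)}$. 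Hence $D_{H(\gamma)}\frac{\d\gamma^x}{\d\gamma}(y)=\frac{\d\gamma^x}{\d\gamma}(y)\,J\big(C_\gamma^{-1}(x)\big)\in H(\gamma)^*$ with $J$ as in \eqref{eqn:iso_rkhs_dual}, and applying $R$ together with $R\circ J=C_\gamma$ yields $R\big(D_{H(\gamma)}\frac{\d\gamma^x}{\d\gamma}(y)\big)=\frac{\d\gamma^x}{\d\gamma}(y)\,x$, exactly the claimed integrand; along the way one also reads off $\big|D_{H(\gamma)}\frac{\d\gamma^x}{\d\gamma}(y)\big|_{H(\gamma)^*}=\frac{\d\gamma^x}{\d\gamma}(y)\,|x|_{H(\gamma)}$.

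The hard part will be justifying the interchange of the Fr\'echet derivative along $H(\gamma)$ with the $\mu$-integral, i.e.\ $D_{H(\gamma)}\frac{\d\nu}{\d\gamma}(y)=\E_{x\sim\mu}\big[D_{H(\gamma)}\frac{\d\gamma^x}{\d\gamma}(y)\big]$, the right-hand side being a Bochner integral in the separable Hilbert space $H(\gamma)^*$, which is norm-convergent because, by the previous paragraph and hypothesis \eqref{eqn:integrability}, the integrand has $H(\gamma)^*$-norm $\le\psi(x)$ with $\psi\in L^1(\mu)$. For a fixed direction $h$ with $|h|_{H(\gamma)}\le1$ I would form the difference quotient $\tfrac1t\big(\frac{\d\gamma^x}{\d\gamma}(y+th)-\frac{\d\gamma^x}{\d\gamma}(y)\big)$, observe that it converges pointwise in $x$ to $\big[D_{H(\gamma)}\frac{\d\gamma^x}{\d\gamma}(y)\big](h)$, and dominate it, via the mean value theorem and the fact that \eqref{eqn:integrability} holds for \emph{all} arguments in $B$, by $\psi(x)$ uniformly in $t$ and in $|h|_{H(\gamma)}\le1$; dominated convergence then gives the exchange. (The explicit estimate $|C_\gamma^{-1}(x)(z)|\le|C_\gamma^{-1}|_{B^*_\gamma}|x|_{H(\gamma)}|z|_B$ is needed only when one wishes to replace \eqref{eqn:integrability} by the more concrete sufficient condition stated before the proposition.) Finally, since $R$ is a bounded linear isometry it commutes with the Bochner integral, so $R\big(D_{H(\gamma)}\frac{\d\nu}{\d\gamma}(y)\big)=\E_{x\sim\mu}\big[R\big(D_{H(\gamma)}\frac{\d\gamma^x}{\d\gamma}(y)\big)\big]=\E_{x\sim\mu}\big[x\,\frac{\d\gamma^x}{\d\gamma}(y)\big]$.

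Combining this with the chain-rule reduction of the first paragraph proves the proposition for $\gamma$-a.e., hence $\nu$-a.e., $y$; comparing with \Cref{prop:cond_expectation_direct} then immediately yields Tweedie's formula $\E[X\mid Y=y]=R\big(D_{H(\gamma)}\log\frac{\d\nu}{\d\gamma}(y)\big)$ of \Cref{theorem:tweedie_main}. The subtlety deserving the most care is the domination in the third step --- ensuring a single $\mu$-integrable majorant controls the difference quotients simultaneously for all relevant base points and all unit directions; the remaining ingredients (Cameron--Martin, the identification $H(\gamma)^*\simeq B^*_\gamma$, and linearity of $R$) are routine once that is in place.
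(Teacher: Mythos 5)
Your proposal is correct and follows essentially the same route as the paper's proof: the chain rule reduces the claim to computing $D_{H(\gamma)}\frac{\d\nu}{\d\gamma}$, the Cameron--Martin formula identifies the pointwise derivative of each $\frac{\d\gamma^x}{\d\gamma}$ as $\frac{\d\gamma^x}{\d\gamma}(y)\,J(C_\gamma^{-1}(x))$, the integrability hypothesis \eqref{eqn:integrability} licenses the derivative--integral exchange, and $R\circ J=C_\gamma$ converts the result into the stated Bochner integral. Your explicit dominated-convergence argument for the difference quotients merely fleshes out what the paper compresses into an appeal to the Leibniz integral rule, so there is no substantive difference.
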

\begin{proof}
    Since we assume the score of $\nu$ is Fr\'echet differentiable, we may apply the chain rule to see that for $y \in B$, 
    \begin{equation} \label{eqn:log_deriv}
    D_{H(\gamma)} \log \frac{\d \nu}{\d \gamma}(y) = \frac{\d \gamma}{\d \nu}(y) D_{H(\gamma)} \frac{\d \nu}{\d \gamma}(y).
    \end{equation}
    Moreover, by the Cameron-Martin formula, if $x \in H(\gamma)$, then
    \begin{equation}
    \left[D_{H(\gamma)} \log \frac{\d \gamma^x}{\d \gamma}\right](y) = J(C_\gamma^{-1}(x))
    \end{equation}
    where $J: B^*_\gamma \to H^*$ is the isomorphism defined in \Cref{eqn:iso_rkhs_dual}. Note this isomorphism is required as $C_\gamma^{-1}(x) \in B^*\gamma$ is an element of the RKHS $B^*_\gamma$, whereas the Fr\'echet derivative is an element of $H(\gamma)^*$.
    Using Lemma \ref{lemma:disintegration} and the assumption that there exists a dominating function $\psi \in L^1(\mu)$, we may use the Leibniz integral rule to calculate that
    \begin{equation} \label{eqn:derivative_density}
    \left[D_{H(\gamma)}  \frac{\d \nu}{\d \gamma}\right](y) = \int_B J(C_\gamma^{-1}(x)) \frac{\d \gamma^x}{\d \gamma} (y) \d \mu(x)
    \end{equation}
    in the sense that
    \begin{equation}
    \left[D_{H(\gamma)}  \frac{\d \nu}{\d \gamma}\right](y)(h) = \int_B J\left(C_\gamma^{-1}(x)\right)(h) \frac{\d \gamma^x}{\d \gamma} (y) \d \gamma(x) \qquad \forall h \in H(\gamma).
    \end{equation}
    Using the fact that $R$ is bounded and $R \circ J = C_\gamma$, we obtain
    \begin{align}
    R\left( D_{H(\gamma)} \frac{\d \nu}{\d \gamma}(y) \right) &= \int_B x \frac{\d \gamma^x}{\d \gamma}(y) \d \gamma(x) \\
    &= \E_{x \sim \gamma}\left[x \frac{\d \gamma^x}{\d \gamma}(y) \right] .
    \end{align}
    Combined with \Cref{eqn:log_deriv}, this yields the claim.
\end{proof}

Combining Proposition \ref{prop:cond_expectation_direct} and Proposition \ref{prop:cameron_martin_score} yields the Banach space generalization of Tweedie's formula, which concludes the proof for \Cref{theorem:tweedie_main}.

\subsection{Special Case: Euclidean Setting}
Here, we will informally replicate our proof of Tweedie's formula in the special case of $B = \mathbb{R}^n$ to provide some intuition and to sanity check this result. In this case we suppose everything admits densities, so that $\nu = p_Y(y), \mu=p_X(x)$, and $\gamma = p_Z(z) = \NN(z \mid 0, C)$. Moreover $p_{Y \mid X}(y \mid x) = \NN(y \mid x, C)$. While many of these steps in this section can be done in a more straightforward fashion when $B = \R^n$, we purposefully follow the structure of our previous calculations.

In this setting, we seek to compute
\begin{equation}
    \nabla_y \log \frac{\d \nu}{\d \gamma}(y) = \nabla_y \log \frac{p_Y(y)}{p_Z(y)}.
\end{equation}

as the score is now taken with respect to the noise measure $p_Z(y)$. Now, using the densities, we may calculate in an analogous fashion
\begin{align}
    \nabla_y \log \frac{p_Y(y)}{p_Z(y)} &= \frac{p_Z(y)}{p_Y(y)} \nabla_y \left( \frac{1}{p_Z(y)} \int_{\R^n} p_Y(y \mid x) p_X(x) \d x \right)  \\
    &= \frac{p_Z(y)}{p_Y(y)} \int_{\R^n} \nabla_y \log \left( \frac{p_Y(y \mid x)}{p_Z(y)} \right) \frac{p_Y(y \mid x)}{p_Z(y)} p(x) \d x \\
    &= \frac{p_Z(y)}{p_Y(y)} \int_{\R^n} \left( C^{-1}(x - y) + C^{-1}(y) \right) \frac{p_Y(y \mid x)}{p_Z(y)} p(x) \d x \\
    &= \frac{p_Z(y)}{p_Y(y)} \int_{\R^n} C^{-1}(x) \frac{p_Y(y \mid x)}{p_Z(y)} p(x) \d x.
\end{align}

This expression is the finite-dimensional analogue of the one we obtain in \Cref{eqn:derivative_density}. This yields
\begin{equation}
    C \nabla_y \log \frac{p_Y(y)}{p_Z(y)} = \frac{p_Z(y)}{p_Y(y)} \int_{\R^n} x \frac{p_Y(y \mid x)}{p_Z(y)} p(x) \d x.
\end{equation}

On the other hand, we may explicitly calculate the conditional expectation by
\begin{align}
    \mathbb{E}[X \mid Y=y] &= \int_{\R^n} x p_{X \mid Y}(x \mid y) \d x \\
    &= \frac{\int_{\R^n} x p_{Y \mid X}(y\mid x) p_X(x) \d x}{p_Y(y)} \\
    &= \frac{p_Z(y)}{p_Y(y)} \int_{\R^n} x \frac{p_{Y \mid X}(y \mid x)}{p_Z(y)} p_X(x) \d x \\
    &= C \nabla_y \log \frac{p_Y(y)}{p_Z(y)},
\end{align}

Let us take a step further towards the standard expression of Tweedie's formula. The gradient \( \nabla_y \log \frac{p_Y(y)}{p_Z(y)} \) can be expanded as
\(
\nabla_y \log p_Y(y) - \nabla_y \log p_Z(y).
\)
For \( p_Z(y) = \mathcal{N}(z \mid 0, C) \), we have \( \nabla_y \log p_Z(y) = -C^{-1}y \). Substituting, this becomes the more familiar expression
\begin{align}
\mathbb{E}[X \mid Y = y] = y + C \nabla_y \log p_Y(y).
\end{align}

\clearpage

\section{Inverse PDE Solver with an FDM Forward Operator}
\label{app:fdm}
In order to show the diverse adaptability of our methodology, we further apply our framework to solve inverse PDE problems by using a Finite Difference Method (FDM)-based forward operator. In this problem, we only model the prior distribution on the coefficient space using the function-space diffusion model. We then reconstruct the initial states from full, noisy, and sparse observations of the solution space. Specifically, the forward operator here is 
\begin{equation}
    \Ab(\ab) = \texttt{FDM\_Solve}(-\nabla [\ab \nabla \ub]=1),
\end{equation}
and we formulate the inverse problem as estimating initial state $\ab$ from corrupted observations of solution state $\ub$. Our goal is to sample from $p(\ab|\ub)$ by applying the function-space reverse diffusion steps with a guidance term $\nabla_\ab \|\Ab(\ab) - \ub \|$. Since $\Ab$ is a standard FDM solver, we can compute gradients via automatic differentiation. We present qualitative examples in \Cref{fig:fdm_forward} and show in~\Cref{tab:fdm_forward} that our method can achieve small relative $L^2$-errors in these challenging cases with both FDM as forward operator and joint learning. In practice, considering similar or better performance, we use a joint learning approach for all our experiments as the inference speed is considerably faster, since the FDM-based method requires backpropagating through the linear solve $\texttt{FDM\_Solve}$.

\begin{figure}[ht]
    \centering
    \includegraphics[width=0.65\textwidth]{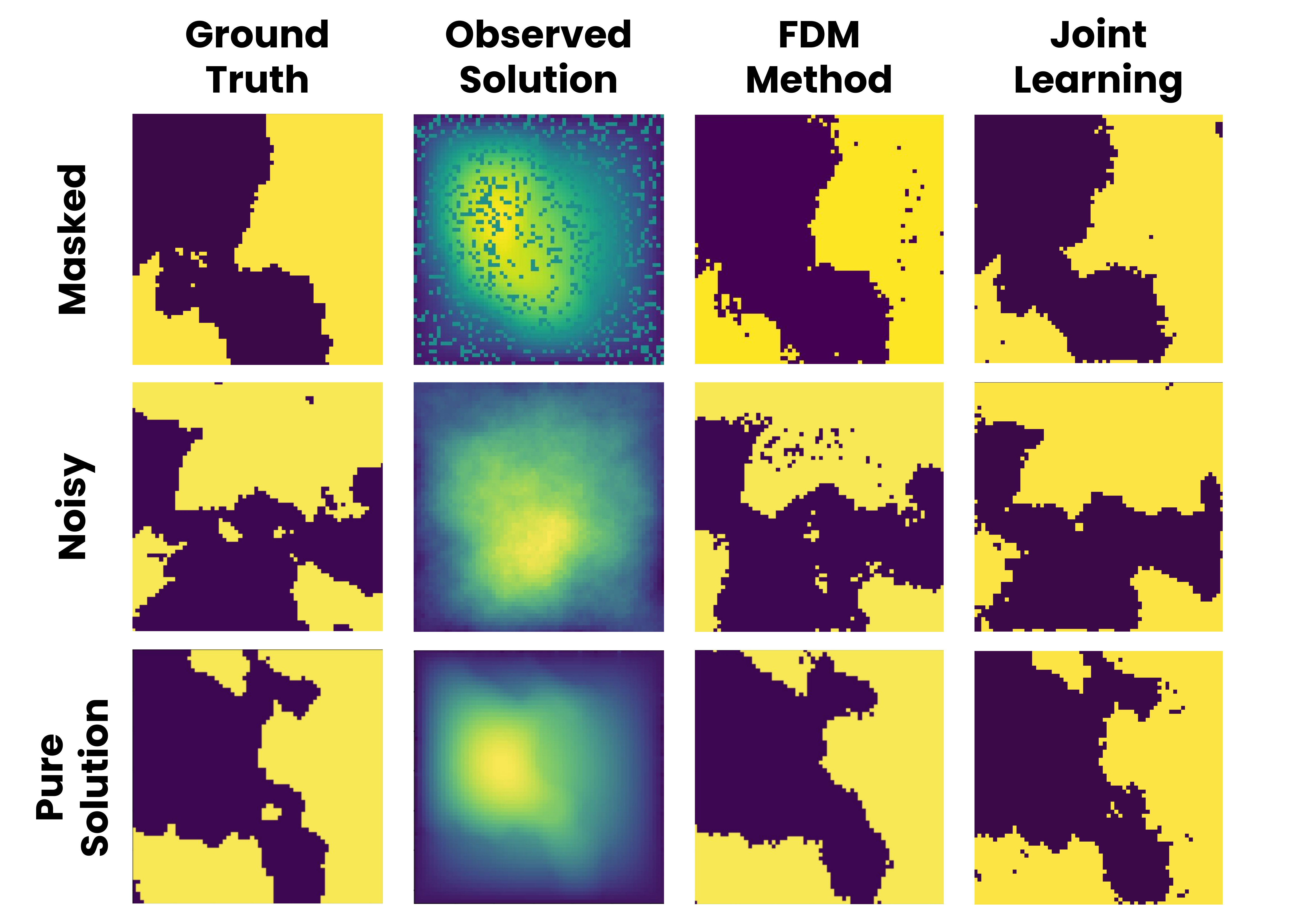}
    \caption{Reconstruction of coefficient functions from partially-observed Darcy Flow problems.}
    \label{fig:fdm_forward}
\end{figure}

\begin{table}[ht]
    \centering
    \caption{The reconstruction error on the inverse Darcy Flow problem with diverse settings.}
    \label{tab:fdm_forward}
    \setlength{\tabcolsep}{0.2em}
    \begin{tabular}{l c c ccc}
        \toprule
          & \multirow{2}{*}{Steps $(N)$} 
          & \multirow{2}{*}{Time}
          & \multicolumn{3}{c}{Corruption} \\
    \cmidrule(lr){4-6}
        & & & {None} & {Noisy} & {Masked} \\
        \midrule
        FDM & 50 & 57s & 4.28\% & 4.67\% & 4.47\% \\
        Joint Learning & 500 & 15s & 3.64\% & 6.32\% & 3.95\% \\
        \bottomrule
    \end{tabular}
\end{table}

\section{Detailed Dataset Description}
\label{sec:pde}
\paragraph{Darcy Flow} Darcy Flow is a fundamental model that describes the flow of a viscous incompressible fluid through a porous medium. The governing equations are given by:
\begin{equation}
    - \nabla \cdot (a(x) \nabla u(x)) = f(x), \qquad x \in (0,1)^2, \\
\label{eq:darcy}
\end{equation}
with constant forcing $f(x)=1$ and zero boundary conditions. We follow the strategies in~\citet{li2020fno} to generate coefficient functions $a \sim h_{_\#}\mathcal{N}(0, (-\Delta + 9\mathbf{I})^{-2})$, where $h\colon\mathbb{R} \to \mathbb{R}$ is set to be $12$ for positive numbers and $3$ otherwise.

\paragraph{Poisson Equation} The Poisson equation describes steady-state diffusion processes:
\begin{equation}
    \nabla^2 u(x) = a(x), \quad x \in (0,1)^2,
\label{eq:poisson}
\end{equation}
with homogeneous Dirichlet boundary conditions $u|_{\partial\Omega} = 0$. We generate coefficient fields $a(x)$ from Gaussian random fields $\mathcal{N}(0, (-\Delta + 9\mathbf{I})^{-2})$. The PDE guidance function is $f = \nabla^2 u - a$.

\paragraph{Helmholtz Equation} The Helmholtz equation models wave propagation in heterogeneous media:
\begin{equation}
    \nabla^2 u(x) + k^2u(x) = a(x), \quad x \in (0,1)^2,
\label{eq:helmholtz}
\end{equation}
with $k=1$ and Dirichlet boundary conditions $u|_{\partial\Omega} = 0$. Coefficient fields $a(x)$ are GRFs generated as in \cite{huang2024diffusionpde}. The PDE guidance function is $f = \nabla^2 u + k^2u - a$.

\paragraph{Navier-Stokes Equations} We further evaluate the performance on the Navier-Stokes equations by generating its initial and terminal states as in~\cite{li2020fno}. 
In particular, we consider the evolution of a velocity field $\ub(x,t)$ over time given by
\begin{align}
    \partial_t \wb(x,t) + \ub(x,t) \cdot \nabla \wb(x,t) &= \nu \Delta \wb(x,t) + f(x), \quad x \in (0,1)^2, \, t \in (0,T], \\
    \nabla \cdot \ub(x,t) &= 0, \quad x \in (0,1)^2, \, t \in [0,T], \\
    \ub(x,0) &= \ab(x), \quad x \in (0,1)^2,
\end{align}
where $\wb = \nabla \times \ub$ is the vorticity; $\nu=\frac{1}{1000}$, viscosity; and $f$, a fixed forcing term. The initial condition $\ab(x)$ is sampled from $\mathcal{N}(0, 7^{3/2}(-\Delta + 49\mathbf{I})^{-5/2})$. The forcing term is defined as $f(x) = \frac{1}{10} \left(\sin(2\pi(x_1+x_2)) + \cos(2\pi(x_1+x_2))\right)$. We simulate the PDE for $T=1$ using a pseudo-spectral method. 
It should be noted that the PDE guidance formulated in \citet{huang2024diffusionpde} is invalid:
\begin{align}
    \nabla\cdot \wb=\nabla\cdot(\nabla\times \ub)=0
\end{align}
Furthermore, due to the lack of information, calculating a PDE loss is non-trivial here.
While the experiments in this work use the original incorrect formulation for consistency with prior benchmarks, we anticipate minimal impact on the final results given the relatively small loss weight $\lambda_{PDE}$.

\paragraph{Navier-Stokes Equations with Boundary Conditions (BCs)} We study bounded flow around cylindrical obstacles, governed by:
\begin{align}
    \partial_t \vb(x,t) + \vb(x,t) \cdot \nabla \vb(x,t) &= -\nabla p + \nu\nabla^2\vb(x,t), \quad x \in \Omega, t \in (0,T], \\
    \nabla \cdot \vb(x,t) &= 0, \quad x \in \Omega, t \in (0,T],
\end{align}
with $\nu=0.001$, $\rho=1.0$, and no-slip boundaries on $\partial\Omega_{\text{left,right,cylinder}}$. The domain contains randomly placed cylinders. We learn the joint distribution of $v_0$ and $v_T$ at $T=4$. 
Its original PDE guidance has the same error as the non-bounded case.

\section{Detailed Experiment Setup}
\label{app:exp-detail}

\paragraph{Datasets} We validate our approach by solving both forward and inverse problems on five different PDE problems. 
These PDEs include Darcy Flow, Poisson, Helmholtz, and Navier-Stokes with and without boundary conditions.
We follow the same strategy as in DiffusionPDE~\cite{huang2024diffusionpde} to generate datasets, where we prepare $50,000$ training samples and $1,000$ test samples for each PDE.
The Navier-Stokes equation with boundary conditions specifically consists of $14,000$ train and $1,000$ test samples.
The resolution is $128\times128$, and in some settings we downsample the data by $2\times$.
For quantitative comparisons, error rates are calculated using the $L^2$ relative error between the predicted and true solutions, except for the inverse Darcy Flow problem, where we use the binary error rate.

\paragraph{Implementation} 
We adopt a 4-level U-shaped neural operator architecture \cite{rahman2022uno} as the denoiser $\Db_\theta$, which has 54M parameters, similar to DiffusionPDE's network size. 
The network is trained using $50,000$ training samples for 200 epochs. 
For the multi-resolution training, we begin training on a coarser grid ($64 \times 64$) for 200 epochs, then switch to a higher resolution ($128 \times 128$) for 100 epochs.
The hyperparameters we used for training and inference are listed in \Cref{tab:parameter}.
We source the quantitative results of deterministic baselines from DiffusionPDE~\cite{huang2024diffusionpde}'s table.

\begin{table}[ht]
\centering
\caption{Hyperparameters of Choice.}
\label{tab:parameter}
\vspace{-0.5em}
\begin{tabular}{lr}
\hline
\rule{0pt}{2ex}\textbf{Hyperparameter} & \textbf{Value}      \\ \hline
\rule{0pt}{2ex}learning\_rate           & 0.0001              \\
learning\_rate\_warmup   & 5 million samples   \\
ema\_half\_life          & 0.5 million samples \\
dropout                  & 0.13                \\
rbf\_scale               & 0.05                \\
sigma\_max               & 80                  \\
sigma\_min               & 0.002               \\
rho                      & 7                   \\ \hline
\end{tabular}
\end{table}

\paragraph{Speed comparison} All the experiments are conducted using a single NVIDIA RTX 4090 GPU.
To determine per-sample inference time, we averaged batch inference time over 10 runs and divided by the batch size.
Batch sizes were optimized to fully utilize GPU memory; specifically, for $128\times128$ data, these were 13 for FunDPS and 8 for DiffusionPDE.

\section{Implementation Details of the Guidance Mechanism}
\label{app:guidance}
For inference, we tuned the guidance strength $\zeta$ on a small validation set, resulting in the values shown in \Cref{tab:hyperparams}. We noted that PDE loss calculations are unreliable in early sampling stages due to high noise levels. Hence empirically, we only apply PDE loss when $\sigma_t<1$. Furthermore, to ensure smooth convergence to the posterior, we found it beneficial to dial down the guidance weights as the noise level decreases. Therefore, we implement a simple but effective scheduling scheme for the guidance weights of both observation and PDE loss:
$$
\tilde{\zeta}_t= \begin{cases} 
\sigma_t\zeta & \text{if } \sigma_t < 1 \\
\zeta & \text{if } \sigma_t \geq 1
\end{cases}
$$

We use the Huber loss for PDE guidance instead of mean squared error because it provides robustness against potential outliers caused by finite difference approximation errors, which improves the stability of gradient updates. We investigated the method's sensitivity to the guidance strength in \Cref{app:strength}.

\begin{table*}[ht]
    \centering
    \setlength{\tabcolsep}{0.5em} 
    \caption{Guidance strength $\zeta$ used for each PDE problem.}
    \label{tab:hyperparams}
    \resizebox{\textwidth}{!}{%
    \begin{tabular}{lcccccccccc}
    \toprule
      & \multicolumn{2}{c}{Darcy Flow} 
      & \multicolumn{2}{c}{Poisson} 
      & \multicolumn{2}{c}{Helmholtz} 
      & \multicolumn{2}{c}{Navier-Stokes}
      & \multicolumn{2}{c}{Navier-Stokes with BCs} \\
    \cmidrule(lr){2-3}
    \cmidrule(lr){4-5}
    \cmidrule(lr){6-7}
    \cmidrule(lr){8-9}
    \cmidrule(lr){10-11}
      & Forward & Inverse & Forward & Inverse & Forward & Inverse & Forward & Inverse & Forward & Inverse \\
    \midrule
    Observation Loss Type
      & MSE & MSE 
      & MSE & MSE 
      & MSE & L2 
      & MSE & L2
      & MSE & L2 \\
    Observation Loss Weight
      & 10000 & 50000 
      & 10000 & 20000 
      & 10000 & 5000 
      & 5000 & 7500  
      & 3000 & 2000 \\
  \midrule
    PDE Loss Type
      & Huber & Huber 
      & Huber & Huber 
      & Huber & Huber 
      & Huber & Huber 
      & Huber & Huber \\
    PDE Loss Weight
      & 0 & 0 
      & 0 & 0 
      & 1 & 1
      & 1 & 10 
      & 100 & 15 \\
    \bottomrule
    \end{tabular}%
    }
\end{table*}

\clearpage

\section{Additional Experiments}
\label{app:additional}

In this section, we will present additional experiments to demonstrate key aspects of our model. Due to time constraints, we primarily focus on two representative PDEs: Darcy Flow and Navier-Stokes equation. These systems are of significant interest and range from smooth elliptic problems to highly nonlinear dynamics.

\subsection{Plug-and-play inverse solvers}
\label{app:plugs}

In order to show the adaptability of the framework, we further test our method with various guidance methods. Namely, \Cref{tab:poisson-plugs} and \Cref{fig:poisson-plugs} demonstrate the reconstruction results on Poisson PDE equation with various inverse solvers~\cite{chung2022dps, zhang2024daps, wang2022zero} on different priors. FunDPS (Function Space + DPS) consistently outperforms other methods within a smaller number of steps.

The significant underperformance of DDNM and DAPS in our setting is primarily due to the extreme sparsity of observation data.
DDNM relies on projecting the sample onto measurement subspace at each step. With only 3\% of observation points, the measurement subspace is extremely low-dimensional compared to the overall function space. As a result, the projection provides very weak guidance and leads to poor reconstruction.
The core issue of DAPS comes from the localness of guided updates during Langevin dynamics. Only the points with observation are updated and others are just added with noise. This makes the intermediate state after Langevin dynamics discontinuous and out-of-distribution, which hinders performance. If we increase the number of sampling steps to 20,000, the accuracy can match DPS, but with 100x more time. 

\vspace{-1em}
\begin{table*}[h]
    \centering
    \caption{The relative errors for Poisson equation under varying priors and plug-and-play inverse solvers. 
    FunDPS corresponds to the intersection of Function Space prior with DPS solver. %
    FunDPS results are based on 500 steps, whereas other methods are performed with at least 2000 steps.}
    \label{tab:poisson-plugs}
    \setlength{\tabcolsep}{0.5em} 
    \begin{tabular}{lcccc}
    \toprule
      \multirow{2}{*}{Inverse Solvers} & \multicolumn{2}{c}{Function Space} 
      & \multicolumn{2}{c}{Euclidean Space} \\
    \cmidrule(lr){2-3}
    \cmidrule(lr){4-5}
      & Forward & Inverse & Forward & Inverse \\
    \midrule
    DPS~\cite{chung2022dps}
      & \textbf{1.99\%} & \textbf{20.47\%} 
      & 4.88\% & 21.10\% \\
    DDNM~\cite{wang2022zero}
      & 10.17\% & 41.67\% 
      & 20.66\% & 43.84\% \\
    DAPS~\cite{zhang2024daps}
      & 40.35\% & 77.72\%
      & 492.6\% & 274.8\% \\
    \bottomrule
    \end{tabular}
\end{table*}

\vspace{-1em}
\begin{figure}[h]
    \centering
    \includegraphics[width=\linewidth]{figs/poisson_plugs.pdf}
    \caption{The qualitative results of \Cref{tab:poisson-plugs} with Poisson equation. First and second rows correspond to the forward problem reconstructions and error maps, respectively. The third and fourth rows correspond to the inverse problem reconstructions and error maps, respectively. Relative errors are also reported for this specific data under each error map, where FunDPS achieves the minimum among all the tasks.}
    \label{fig:poisson-plugs}
\end{figure}

\subsection{Fully-observed problems}

Deep learning approaches have been extensively researched for solving classical forward and inverse problems, where either the initial state or the final state is fully known. While the deterministic baselines are powerful in these tasks, our method still demonstrates superior performance in 3 out of 4 cases, highlighting the strong modeling capabilities of our framework. Ours also outperforms DiffusionPDE in all cases, shown in \Cref{tab:fully_obs_results}.

\begin{table*}[ht]
  \centering
  \setlength{\tabcolsep}{0.5em}
  \caption{Comparison of different methods on forward and inverse problems with full observation. 
  Error rates are calculated using the $L^2$ relative error between the predicted and true solutions, except for the Darcy Flow inverse problem, where a binary error rate is used. 
  FunDPS significantly outperforms the fixed-resolution diffusion baseline and achieves top performance even when compared to deterministic baselines in 3 out of 4 cases.
  The best results are highlighted in \textbf{bold}.}
  \label{tab:fully_obs_results}
  \begin{tabular}{lccccc}
  \toprule
    & \multirow{2}{*}{Steps $(N)$} & \multicolumn{2}{c}{Darcy Flow} 
    & \multicolumn{2}{c}{Navier-Stokes} \\
  \cmidrule(lr){3-4}
  \cmidrule(lr){5-6}
    & & Forward & Inverse & Forward & Inverse \\
  \midrule
  FunDPS (ours)
    & 200 & 1.1\% & 4.2\% 
    & 4.9\% & 7.8\% \\
  FunDPS (ours)
    & 500 & 1.4\% & 3.0\%
    & 3.0\% & 7.0\% \\
  FunDPS (ours)
    & 2000 & \textbf{0.9\%} & \textbf{2.1\%}
    & 1.6\% & \textbf{6.6\%} \\
  \midrule
  DiffusionPDE\footnotemark
    & 2000 & 2.9\% & 13.0\% 
    & 2.4\% & 8.4\% \\
  FNO
    & - & 5.3\% & 5.6\%
    & 2.3\% & 6.8\% \\
  PINO
    & - & 4.0\% & \textbf{2.1\%} 
    & \textbf{1.1\%} & 6.8\% \\
  DeepONet
    & - & 12.3\% & 8.4\% 
    & 25.6\% & 19.6\% \\
  PINN
    & - & 15.4\% & 10.1\% 
    & 27.3\% & 27.8\% \\
  \bottomrule
  \end{tabular}
\end{table*}
\footnotetext{We again found reproducibility issues with DiffusionPDE. Our reproduced results are here. Please refer to \Cref{app:reproduce} for details.}

\subsection{Multi-resolution training}

Our multi-resolution training combines efficient low-resolution learning with high-resolution finetuning. We first train for 200 epochs on 64×64 resolution data to learn coarse features, then train for 100 epochs on 128×128 resolution to capture fine details. This approach leverages the resolution independence of neural operators while reducing computational costs. 

\Cref{tab:multires_results} compares training configurations across resolutions. Direct high-resolution training performs well but requires substantially more parameters and computing resources. Our multi-resolution approach achieves comparable or better performance while maintaining the smaller model size, reducing total GPU hours by about 25\%.

\begin{table*}[ht]
    \centering
    \setlength{\tabcolsep}{0.5em} 
    \caption{Comparison of different training resolution strategies. "Mixed" refers to our two-phase approach, which achieves comparable or better performance to training directly at high resolution while using significantly fewer parameters. We use 500 steps for evaluation.}
    \label{tab:multires_results}
    \begin{tabular}{ccccccc}
    \toprule
      \multirow{2}{*}{Training Res.} & \multirow{2}{*}{Inference Res.} & \multirow{2}{*}{\# Params} & \multicolumn{2}{c}{Darcy Flow} 
      & \multicolumn{2}{c}{Navier-Stokes} \\
    \cmidrule(lr){4-5}
    \cmidrule(lr){6-7}
      & & & Forward & Inverse & Forward & Inverse \\
    \midrule
    64 & 64  & 54M & 3.03\% & 6.75\%  & 3.20\% & 8.85\% \\
    64 & 128 & 54M & 3.64\% & 5.24\%  & 3.81\% & 8.48\% \\
    128 & 128 & 184M & 2.74\% & 5.03\%  & 3.35\% & 8.20\% \\
    Mixed & 128 & 54M & 2.49\% & 5.18\%  & 3.32\% & 8.16\% \\
    \bottomrule
    \end{tabular}%
\end{table*}

\subsection{Multi-resolution inference}
\label{app:multires-inference}

Our multi-resolution sampling pipeline is shown in \Cref{fig:model_comparison}b. It includes two stages with noise increase in between.
The first stage is a complete diffusion sampling at low resolution, from $\sigma^{(1)}_{\max}$ to $\sigma^{(1)}_{\min}$, over $t_{\text{up}}$ steps. 
Samples are then upscaled (e.g., using bicubic interpolation, though our method is not sensitive to this choice). 
To counter upscaling artifacts and initiate a subsequent high-resolution diffusion, noise is added at $\sigma=\sigma^{(2)}_{\max}$.
Subsequently, a second diffusion sampling process is performed.
Empirically, we found that $\sigma^{(2)}_{\min}$ in the initial low-resolution stage need not be very low, as details are refined in the second stage.
Furthermore, $\sigma^{(2)}_{\max}$ for initiating the high-resolution process is typically set to a modest value (e.g., in the 1--10 range), considerably lower than the initial $\sigma^{(1)}_{\max}=80$, as we want to keep the existing information in the low-resolution sample.
We provide the multi-resolution inference results in \Cref{fig:speed_vs_error_rate}. \Cref{tab:darcy_multires_rebuttal} further shows that FunDPS generalizes across resolutions, while DiffusionPDE fails to transfer and must be retrained for each fixed grid.

\begin{table*}[ht]
    \centering
    \setlength{\tabcolsep}{0.6em}
    \caption{Results on Darcy Flow under different training–inference resolution pairs. DiffPDE-2000 and DiffPDE-500 represent 2000 and 500 steps for Diffusion PDE, respectively. FunDPS, using 500 steps, achieves superior generalization and maintains accuracy under multi-resolution settings, whereas DiffusionPDE fails to generalize between grids.}
    \label{tab:darcy_multires_rebuttal}
    \begin{tabular}{ccccc}
    \toprule
    Training & Inference & DiffPDE-2000 & DiffPDE-500 & FunDPS \\
    \midrule
    64  & 64   & 6.38\% & 7.96\% & \textbf{3.03\%} \\
    64  & 128  & 33.72\% & 35.57\% & \textbf{3.64\%} \\
    128 & 128  & 6.07\% & 4.60\% & \textbf{2.74\%} \\
    Mixed & 128 & 3.83\% & 4.53\% & \textbf{2.49\%} \\
    \bottomrule
    \end{tabular}
\end{table*}

\subsection{Number of observations}
\label{app:obs}

We investigate how the number of observations affects model performance in both forward and inverse Darcy Flow problems. We use 500 steps for evaluation. \Cref{fig:darcy_accuracy_vs_obs} shows that performance improves consistently as we increase the number of observed points from 100 (0.6\%) to 2000 (12\%) of the spatial domain. It is worth noting that our method can achieve reasonable accuracy even with extremely sparse observations, demonstrating its effectiveness.

We further analyze the performance of FunDPS and DiffusionPDE under different observation sparsities on both forward and inverse Darcy Flow problems. As shown in \Cref{tab:darcy_obs_rebuttal}, FunDPS maintains high accuracy even with extremely sparse observations (as low as $0.5\%$ of spatial points), while DiffusionPDE performance degrades significantly under sparse settings.

\begin{table*}[ht]
    \centering
    \setlength{\tabcolsep}{0.3em}
    \caption{Comparison of FunDPS and DiffusionPDE across varying numbers of observations on Darcy Flow. DiffPDE-2000 and DiffPDE-500 represent 2000 and 500 steps for Diffusion PDE, respectively. FunDPS, using 500 steps, maintains $<$10\% relative error even with as few as 0.5\% observed points.}
    \label{tab:darcy_obs_rebuttal}
    \begin{tabular}{ccccccc}
    \toprule
    & \multicolumn{3}{c}{Forward} & \multicolumn{3}{c}{Inverse} \\
    \cmidrule(lr){2-4}\cmidrule(lr){5-7}
    \# of Obs. & DiffPDE-2000 & DiffPDE-500 & FunDPS & DiffPDE-2000 & DiffPDE-500 & FunDPS \\
    \midrule
    100  & \textbf{7.2\%} & 13.8\% & \textbf{7.2\%} & 10.7\% & 16.4\% & \textbf{8.0\%} \\
    200  & 6.5\% & 7.8\%  & \textbf{4.8\%} & 9.1\%  & 10.2\% & \textbf{6.3\%} \\
    500  & 6.1\% & 4.6\%  & \textbf{2.9\%} & 7.9\%  & 8.1\%  & \textbf{5.2\%} \\
    2000 & 2.2\% & 2.3\%  & \textbf{1.7\%} & 3.9\%  & 4.7\%  & \textbf{4.1\%} \\
    \bottomrule
    \end{tabular}
\end{table*}

\begin{figure}[h]
    \centering
    \includegraphics[width=0.6\textwidth]{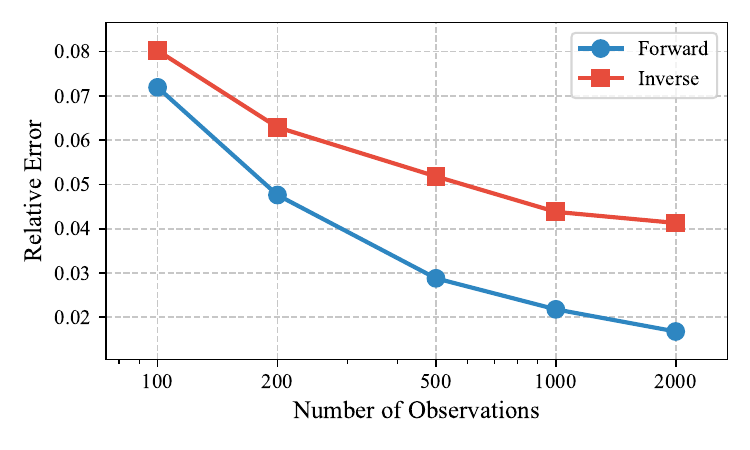}
    \vspace{-1em}
    \caption{Comparison of the accuracy of our method with respect to the number of observations for he forward and inverse Darcy Flow problems.}
    \label{fig:darcy_accuracy_vs_obs}
\end{figure}

\subsection{Sensitivity to the guidance strength}
\label{app:strength}

The guidance weight $\zeta$ is a key hyperparameter in our framework, manually chosen for each task by tuning on a small validation set. This is a common practice in guided diffusion models, as the optimal weight often depends on the forward operator and prior data distribution. We provide a list of tuned $\zeta$ values in \Cref{tab:hyperparams}. In our experience, the model's performance is stable across wide ranges of $\zeta$.  However, values that are too high can cause sampling to become unstable and diverge, while values that are too low result in weak guidance and less accurate reconstructions. 

We conducted an ablation study to evaluate performance on Darcy Flow and Poisson equation forward/inverse problems across a range of $\zeta$ values, using 500 sampling steps. The results, shown in \Cref{tab:zeta_ablation}, demonstrate expected behavior.

\begin{table*}[ht]
\centering
\setlength{\tabcolsep}{0.5em}
\caption{Ablation study on guidance weight $\zeta$ for Darcy Flow and Poisson problems. Errors are $L^2$ relative errors (\%). NA indicates divergence due to instability.}
\label{tab:zeta_ablation}
\begin{tabular}{ccccc}
\toprule
$\zeta$ & Darcy Forward & Darcy Inverse & Poisson Forward & Poisson Inverse \\
\midrule
1000 & 6.92\% & 24.02\% & 3.31\% & 34.38\% \\
5000 & 2.92\% & 11.01\% & 2.56\% & 28.05\% \\
10000 & 2.49\% & 9.16\% & 1.99\% & 19.47\% \\
20000 & NA & 9.86\% & 18.86\%& 20.47\% \\
50000 & NA & 5.18\% & NA & NA \\
\bottomrule
\end{tabular}
\end{table*}

\subsection{Additional experiments on Navier-Stokes with boundary conditions}
In \Cref{tab:main_table_results}, we present experiments for the Navier–Stokes equations with boundary conditions by using both boundary observations and $1\%$ random interior points. We also compare our methodology against a standard setup in which only $3\%$ of the data is revealed for state reconstruction. Numerical results for this comparison can be found in \Cref{tab:ns-bounded_3}.

\begin{table*}[h]
    \centering
    \setlength{\tabcolsep}{0.5em} 
    \caption{Quantitative results of the Navier-Stokes equations with boundary conditions, where sparse observation consists of 3\% of data, the same as in other PDE experiments.}
    \label{tab:ns-bounded_3}
    \begin{tabular}{lccc}
    \toprule
      & \multirow{2}{*}{Steps $(N)$}
      & \multicolumn{2}{c}{Navier-Stokes with BCs} \\
    \cmidrule(lr){3-4}
      & & Forward & Inverse \\
    \midrule
    DiffusionPDE
      & 2000 & 9.78\% & 4.71\% \\ 
    FunDPS (ours)
      & 200 & 4.62\% & 3.14\% \\
    FunDPS (ours)
      & 500 & \textbf{3.48\%} & \textbf{3.07\%} \\
    \bottomrule
    \end{tabular}
\end{table*}

\subsection{Design Choices of DiffusionPDE and FunDPS}
\label{app:arch_ablation}

In order to show the effectiveness of our design choice, we include an ablation study on Darcy Flow PDE by adding different components to achieve FunDPS starting from DiffusionPDE. Details are shown in~\Cref{tab:arch_ablation}, and FunDPS's superiority can be observed upon adding the components.

\begin{table*}[ht]
    \centering
    \setlength{\tabcolsep}{0.5em}
    \caption{Ablation study on the design choices of DiffusionPDE and FunDPS, tested on forward and inverse Darcy Flow problems. Functional diffusion noise significantly boosts capability. FunDPS achieves superior performance with fewer steps by using functional noise and neural operators.}
    \label{tab:arch_ablation}
    \begin{tabular}{lccccc}
    \toprule
      \multirow{2}{*}{} & \multirow{2}{*}{\shortstack{Functional \\ Noise?}} & \multirow{2}{*}{\shortstack{Neural \\ Operator?}} & \multirow{2}{*}{Steps ($N$)} & \multicolumn{2}{c}{Darcy Flow} \\
    \cmidrule(lr){5-6}
      & & & & Forward & Inverse \\
    \midrule
    DiffusionPDE & \xmark & \xmark
      & 2000
      & 6.07\% & 14.50\% \\
    FunDPS w/o NO & \cmark & \xmark
      & 500
      & 3.59\% & 7.77\% \\
    FunDPS & \cmark & \cmark
      & 500
      & \textbf{2.49\%} & \textbf{5.18\%} \\
    \bottomrule
    \end{tabular}
\end{table*}

\section{Reproducibility of DiffusionPDE}
\label{app:reproduce}

We replicated DiffusionPDE's results using their provided code and weights.
Due to reproducibility issues, we reran all experiments in communication with DiffusionPDE's authors and present the comparison in \Cref{tab:reproduce}.

\begin{table*}[ht]
    \centering
    \setlength{\tabcolsep}{0.5em} 
    \caption{Comparison between reported results in the DiffusionPDE paper and our reproduced results using the official code and weights.}
    \label{tab:reproduce}
    \resizebox{\textwidth}{!}{%
    \begin{tabular}{lcccccccccc}
    \toprule
      & \multicolumn{2}{c}{Darcy Flow} 
      & \multicolumn{2}{c}{Poisson} 
      & \multicolumn{2}{c}{Helmholtz} 
      & \multicolumn{2}{c}{Navier-Stokes}
      & \multicolumn{2}{c}{Navier-Stokes (BCs)} \\
    \cmidrule(lr){2-3}
    \cmidrule(lr){4-5}
    \cmidrule(lr){6-7}
    \cmidrule(lr){8-9}
    \cmidrule(lr){10-11}
      & Forward & Inverse & Forward & Inverse & Forward & Inverse & Forward & Inverse & Forward & Inverse \\
    \midrule
    DiffusionPDE (reported)
      & 2.5\% & 3.2\% 
      & 4.5\% & 20.0\%
      & 8.8\% & 22.6\%
      & 6.9\% & 10.4\%
      & 3.9\% & 2.7\% \\
    DiffusionPDE (reproduced)
      & 6.07\% & 7.87\% 
      & 4.88\% & 21.10\% 
      & 12.64\% & 19.07\% 
      & 3.78\% & 9.63\% 
      & 9.69\% & 4.18\% \\
    \bottomrule
    \end{tabular}%
    }
\end{table*}

\section{Impact Statement}
This research introduces FunDPS, a novel framework that significantly advances the solution of PDE-based inverse problems in function spaces. The main positive societal impact is the acceleration of scientific and engineering research by enabling more accurate and efficient modeling from sparse, noisy data in many areas.

Our released models and code are specialized for these scientific applications and are trained on simulated, non-sensitive data. Thus, they carry a low risk of societal misuse and do not produce general-purpose generative content for public consumption.

\clearpage
\section{Qualitative Comparison}
\label{app:qualitative}

Here we conduct qualitative comparisons between the prediction of FunDPS and DiffusionPDE on Darcy Flow (\Cref{fig:comparison_darcy}), Poisson (\Cref{fig:comparison_poisson}), Helmholtz (\Cref{fig:comparison_helmholtz}) and Navier-Stokes (\Cref{fig:comparison_ns-uboundary,fig:comparison_ns-boundary}) problems.

\begin{figure}[htbp]
    \centering
    \includegraphics[width=\textwidth]{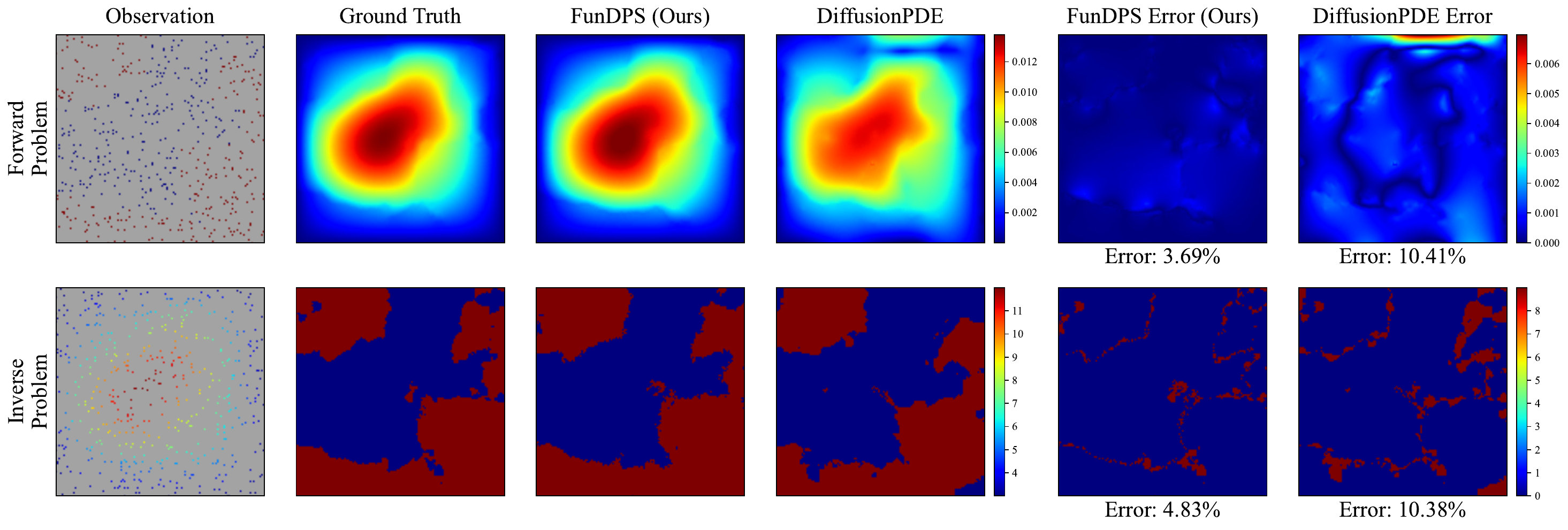}
    \caption{We compare the results of our method with the diffusion-based baseline, DiffusionPDE, on the Darcy Flow problem.
    The first column shows the 3\% observed measurements, while the second column shows the corresponding ground truth (note that these two states are in different spaces).
    The middle two columns show the reconstruction results of our method and DiffusionPDE, respectively. The last two columns present the absolute error between the predictions and the ground truth. We provide relative errors for this specific test sample as well.}
    \vspace{-0.5em}
    \label{fig:comparison_darcy}
\end{figure}

\begin{figure}[htbp]
    \centering
    \includegraphics[width=\textwidth]{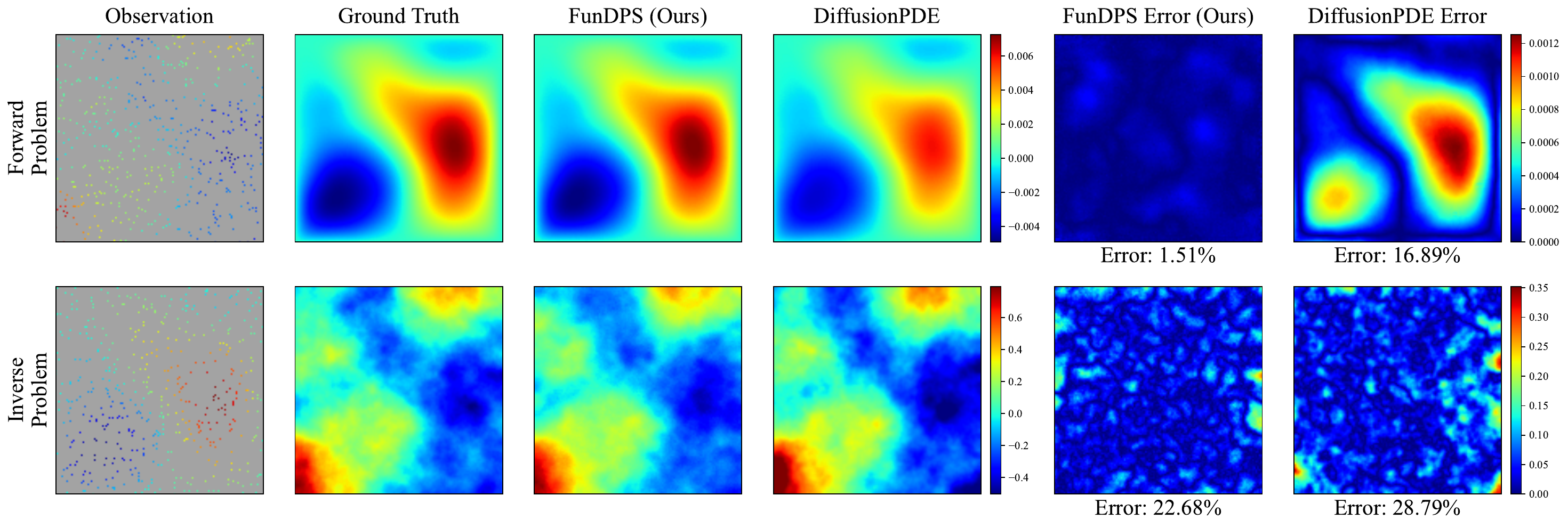}
    \caption{We compare the results of our method with the diffusion-based baseline, DiffusionPDE, on the Poisson problem.
    The first column shows the 3\% observed measurements, while the second column shows the corresponding ground truth (note that these two states are in different spaces).
    The middle two columns show the reconstruction results of our method and DiffusionPDE, respectively. The last two columns present the absolute error between the predictions and the ground truth. We provide relative errors for this specific test sample as well.}
    \vspace{-0.5em}
    \label{fig:comparison_poisson}
\end{figure}

\begin{figure}[htbp]
    \centering
    \includegraphics[width=\textwidth]{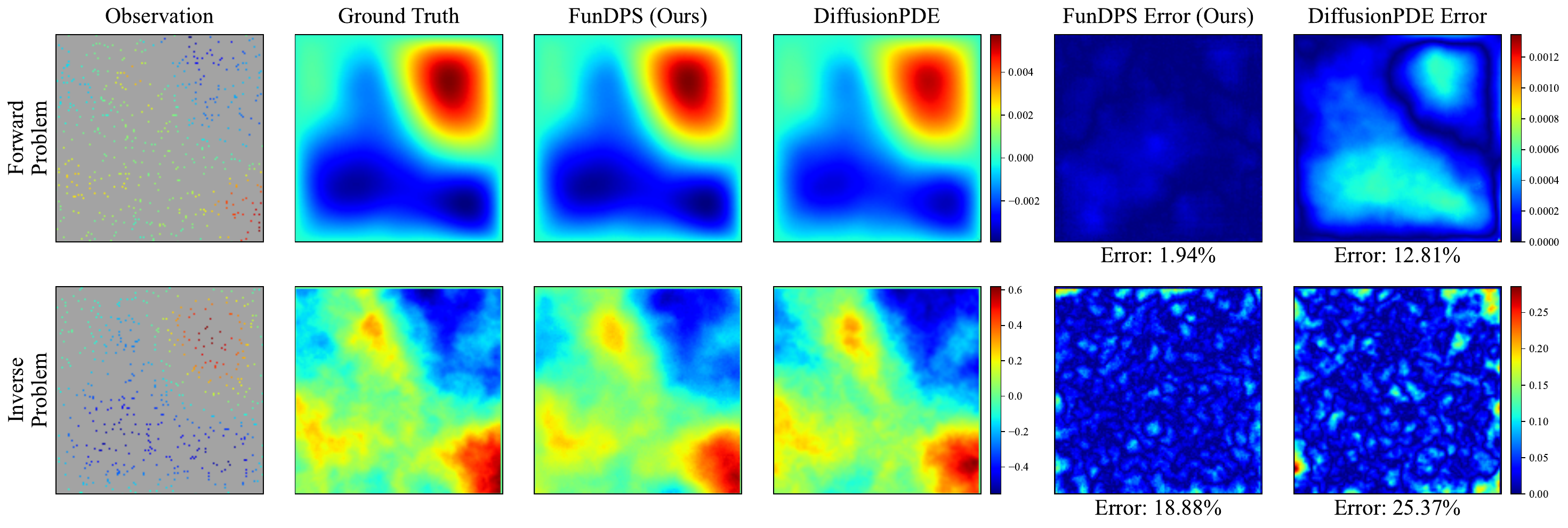}
    \caption{We compare the results of our method with the diffusion-based baseline, DiffusionPDE, on the Helmholtz problem.
    The first column shows the 3\% observed measurements, while the second column shows the corresponding ground truth (note that these two states are in different spaces).
    The middle two columns show the reconstruction results of our method and DiffusionPDE, respectively. The last two columns present the absolute error between the predictions and the ground truth. We provide relative errors for this specific test sample as well.}
    \vspace{-0.5em}
    \label{fig:comparison_helmholtz}
\end{figure}

\begin{figure}[htbp]
    \centering
    \includegraphics[width=\textwidth]{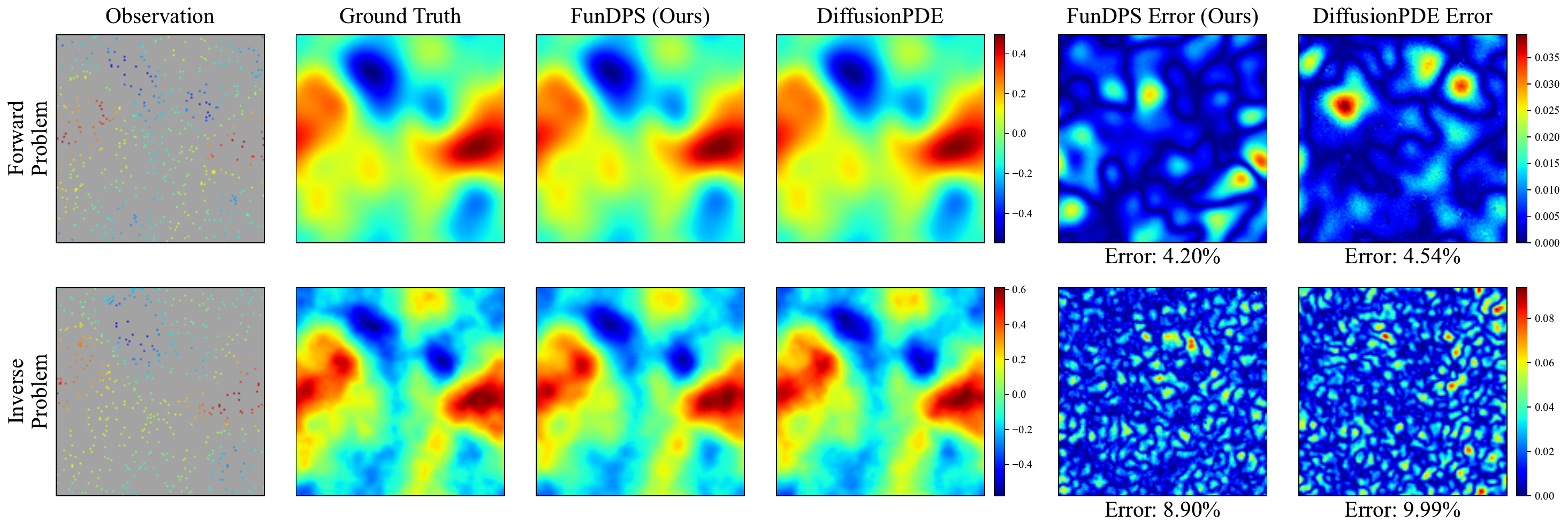}
    \caption{We compare the results of our method with the diffusion-based baseline, DiffusionPDE, on the Navier-Stokes problem.
    The first column shows the 3\% observed measurements, while the second column shows the corresponding ground truth (note that these two states are in different spaces).
    The middle two columns show the reconstruction results of our method and DiffusionPDE, respectively. The last two columns present the absolute error between the predictions and the ground truth. We provide relative errors for this specific test sample as well.}
     \vspace{-0.5em}
    \label{fig:comparison_ns-uboundary}
\end{figure}

\begin{figure}[htbp]
    \centering
    \includegraphics[width=\textwidth]{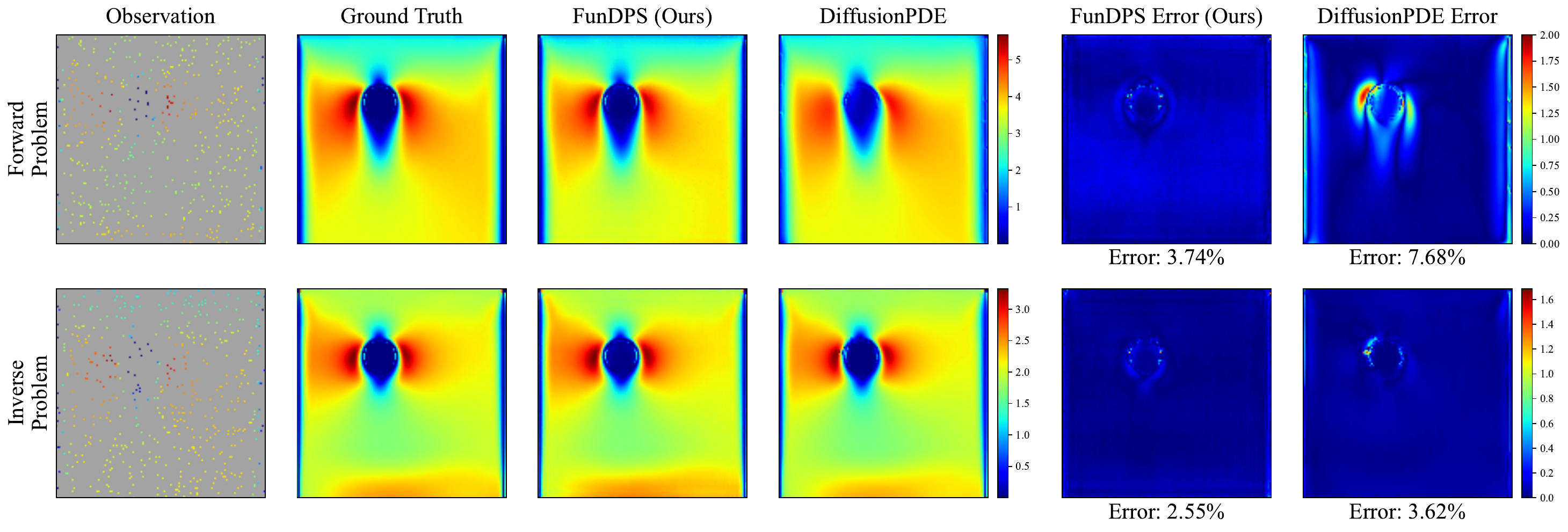}
    \caption{We compare the results of our method with the diffusion-based baseline, DiffusionPDE, on the Navier-Stokes with boundary conditions problem.
    The first column shows the 3\% observed measurements, while the second column shows the corresponding ground truth (note that these two states are in different spaces).
    The middle two columns show the reconstruction results of our method and DiffusionPDE, respectively. The last two columns present the absolute error between the predictions and the ground truth. We provide relative errors for this specific test sample as well.}
     \vspace{-0.5em}
    \label{fig:comparison_ns-boundary}
\end{figure}

\end{document}